\newtheorem{assumption}[theorem]{Assumption}
\newcommand{\argmin}{\mathop{\mathrm{argmin}}}
\def\cD{\mathcal{D}}
\def\cH{\mathcal{H}}
\def\cN{\mathcal{N}}
\def\cP{\mathcal{P}}
\def\cX{\mathcal{X}}
\def\bs{\ensuremath\boldsymbol}
\title{Dynamic Regret for Strongly Adaptive Methods and Optimality of Online KRR}
\begin{document}

\maketitle

\begin{abstract}%
We consider the framework of non-stationary Online Convex Optimization where a learner seeks to control its \emph{dynamic regret} against an \emph{arbitrary} sequence of comparators. When the loss functions are strongly convex or exp-concave, we demonstrate that Strongly Adaptive (SA) algorithms can be viewed as a principled way of controlling dynamic regret in terms of \emph{path variation} $V_T$ \emph{of the comparator sequence}. Specifically, we show that SA algorithms enjoy $\tilde O(\sqrt{TV_T} \vee \log T)$ and $\tilde O(\sqrt{dTV_T} \vee d\log T)$ dynamic regret for strongly convex and exp-concave losses respectively \emph{without} apriori knowledge of $V_T$. The versatility of the principled approach is further demonstrated by the novel results in the setting of learning against bounded linear predictors and online regression with Gaussian kernels.

Under a related setting, the second component of the paper addresses an open question posed by \cite{KRR} that concerns online kernel regression with squared error losses. We derive a new lower bound on a certain \emph{penalized regret} which establishes the near minimax optimality of online Kernel Ridge Regression (KRR). Our lower bound can be viewed as an RKHS extension to the lower bound derived in \cite{vovk2001} for online linear regression in finite dimensions.
\end{abstract}

\begin{keywords}%
Non-stationary Online Convex Optimization, Dynamic regret, Strongly Adaptive methods, Kernel Regression
\end{keywords}

\section{Introduction} \label{sec:intro}
Online Convex Optimization (OCO) is a powerful learning paradigm for real-time decision making. It has been applied in many influential applications such as portfolio selection, time series forecasting, and online recommendation systems to cite a few \citep{hazan2007logregret, koolen2015minimax,hazan2016introduction}. The OCO problem is modelled as an iterative game between a learner and an adversary that proceeds for $T$ rounds as follows. At each time step $t$, the learner chooses a point $\bs x_t$ in a convex decision set $\cD$. Then the adversary reveals a convex loss function $f_t:\cD \rightarrow \mathbb{R}$. The most common way of measuring the performance of a learner is via its static regret, $R^s_T(z):=\sum_{t=1}^T (f_t(\bs x_t) -  f_t(\bs z))$, where $\bs z$ is termed as a fixed \emph{comparator} in hindsight which can be any point in $\cD$. For example $\bs z$ can be chosen as $\argmin_{\bs x \in \cD} \sum_{t=1}^T f_t(\bs x)$ with the knowledge of the entire sequence of loss functions. Learning is said to happen whenever the regret grows sub-linearly w.r.t. $T$. However, in the case of non-stationary environments such as stock market, one is often interested in matching the performance of a sequence of decisions in hindsight. In such circumstances, the notion of static regret fails to assess the performance of the learner.  To better capture the non-stationarity, \cite{zinkevich2003online} introduces the notion of \emph{dynamic regret}:

\begin{align}
    R_T(\bs z_1,\ldots,\bs z_T) := \sum_{t=1}^T (f_t(\bs x_t) - f_t(\bs z_t)), \label{eq:d-regret}
\end{align}
where $\bs z_1,\ldots,\bs z_T$ is \emph{any} sequence of comparators in $\cD$. The degree of non-stationarity present in the comparator sequence is measured using the path variational defined as
\begin{equation}
    V_T(\bs z_1,\ldots,\bs z_T) := \sum_{t=2}^{T} \|\bs z_t - \bs z_{t-1} \|, \label{eq:path-var}
\end{equation}
where $\| \cdot \|$ is the Euclidean norm. In what follows, we drop the arguments and represent the variation by $V_T$ for brevity. The dynamic regret bounds are usually expressed as a function of $T$ and $V_T$.


It is known that with convex loss functions the optimal dynamic regret is $O(\sqrt{T(1+V_T)})$ \citep{zhang2018adaptive} which improves to $O(\sqrt{dTV_T} \vee d\log T)$ \citep{yuan2019dynamic}, where $d$ is the dimensionality of $\mathcal{D}$ and $(a \vee b) = \max \{a,b \}$,  with additional curvature properties such as exp-concavity. 

A parallel line of research \citep{hazan2007adaptive,daniely2015strongly,koolen2016specialist} focus on developing algorithms whose static regret is controlled in any time interval. Specifically, \cite{daniely2015strongly} develops the notion of Strongly Adaptive (SA) algorithms defined as:

\begin{definition} \citep{daniely2015strongly} \label{def:sa}
Let $[T]:=\{1,\ldots,T \}$. An algorithm is said to be Strongly Adaptive  if for every continuous interval $I \subseteq [T]$, the static regret incurred by the algorithm is $O(\text{poly}(\log T) R^*(|I|))$, where $R^*(|I|)$ is the value of minimax static regret incurred in an interval of length $|I|$.
\end{definition}


\cite{zhang2018dynamic} shows that SA algorithms incur dynamic regret of $\tilde O(T^{2/3}C_T^{1/3})$ \footnote{$\tilde O(\cdot)$ hides polynomial factors of $\log T$.} for convex losses and $\tilde O(\sqrt{TC_T})$ for strongly convex losses, where 
\begin{align}
    C_T
    &:= \sum_{t=2}^{T} \sup_{\bs x \in \cD} |f_t(\bs x) - f_{t-1}(\bs x)| \label{eq:c_t}
\end{align}
to captures the non-stationarity of the problem in terms of the degree to which the sequence of losses changes over time. They further show that both results are optimal modulo poly logarithmic factors of $T$. For exp-concave losses, they derive a regret bound of $\tilde O(\sqrt{dTC_T})$. However, in \cite{zhang2018dynamic}, a question that was left open is whether it is possible to derive dynamic regret rates for SA methods that depend on the variational $V_T$. 

In this paper, we answer this affirmatively for strongly convex (Theorem \ref{thm:sc}) and exp-concave (Theorem \ref{thm:ec}) losses. Specifically, we show that for SA methods,
\begin{align}
    R_T(\bs z_1,\ldots,\bs z_T)
    &=\tilde O(\sqrt{TV_T(\bs z_1,\ldots,\bs z_T)} \vee \log T), \quad \text{(for strongly convex losses)}
\end{align}
and
\begin{align}
    R_T(\bs z_1,\ldots,\bs z_T)
    &=\tilde O(\sqrt{dTV_T(\bs z_1,\ldots,\bs z_T)} \vee d\log T). \quad \text{(for exp-concave losses)}
\end{align}

This result immediately implies that SA algorithms can be seen as a unifying framework that allows one to control dynamic regret under different variationals ($C_T$ and $V_T$) \emph{simultaneously} whenever losses have curvature properties. Though this dynamic regret is attained by \cite{yuan2019dynamic} (without $\log T$ factors) by fundamentally different algorithms, our proof techniques are much simpler and shorter. Further, we demonstrate the versatility of this perspective by deriving new dynamic regret rates in various other interesting use cases where the results of \cite{yuan2019dynamic} do not apply (see Section \ref{sec:ext}). Every dynamic regret rate proposed in this paper are adaptive to $V_T$ in the sense that the algorithms do not require the knowledge of $V_T$ ahead of time.

In the second part of paper, we concern ourselves with a related but slightly different setting, under the static regret framework. More precisely, we provide a lower bound on a certain \emph{penalized regret} (see Definition \ref{def:pen-regret}) for the problem of competing against a \emph{fixed function} in an RKHS under squared error losses. We show (Theorem \ref{thm:lb}) that the penalized regret has a lower bound of $\Omega \left(  \log \left | \bs I + \frac{1}{a} \bs K \right|\right)$ for some fixed $a > 0$ where $|\cdot |$ denotes the determinant.

This establishes the near optimality of online clipped Kernel Ridge Regression (KRR) \citep{KRR} and kernel-AWV \citep{Jzquel2019EfficientOL} thus solving a problem open since the work of \cite{KRR}. The penalized regret we consider is similar to the one studied in \cite{vovk2001} for finite dimensional linear regression.

To summarize, this paper records a preliminary set of results about the dynamic regret of strongly adaptive methods. The findings in this work also initiated the study on minimax optimality of SA methods in a setting where improper learning is allowed \citep{improperDynamic}. Specifically below are the key contributions of this work.

\begin{itemize}
    \item We show that Strongly Adaptive (SA) algorithms are sufficient to guarantee the dynamic regret rates of $\tilde O(\sqrt{TV_T} \vee \log T)$ for strongly convex losses and  $\tilde O(\sqrt{dTV_T} \vee d\log T)$ for exp-concave losses (see Theorems \ref{thm:sc} and  \ref{thm:ec} respectively). Combined with the results of \cite{zhang2018dynamic}, we feature SA methods as a \emph{unifying framework} for \emph{simultaneously} controlling dynamic regret with variationals $V_T$ and $C_T$.
    
    \item We demonstrate the versatility of this perspective by deriving several extensions (Theorems \ref{thm:ec-inv} and \ref{thm:sq-dyn}) where the results of \cite{yuan2019dynamic} don't apply.  In particular, for competing against set of linear predictors that output bounded predictions as in \cite{Luo2016Sketch}, we show that SA methods enjoy dynamic regret rate that is \emph{independent} of the diameter of the decision set. To the best of our knowledge this is the \emph{first} time, dynamic regret rate has been proposed for such a benchmark set which is often more of practical interest than set of linear predictors with bounded $L^2$ norm.
    
    \item We provide a lower bound (Theorem \ref{thm:lb}) that establishes the near optimality of online clipped KRR and kernel-AWV thus solving a problem open since the work of \cite{KRR}.
\end{itemize}

The rest of the paper is organized as follows. In Section \ref{sec:lit}, we discuss the related works followed by a discussion on preliminaries in Section \ref{sec:prelim}. We present the dynamic regret guarantees for strongly convex losses and exp-concave losses in Section \ref{sec:finite}. The extensions to competing against bounded linear predictors and online kernel regression is presented in Section \ref{sec:ext}. The lower bound on penalized regret for online clipped KRR and kernel-AWV is explored in Section \ref{sec:lb}.

\section{Related Work} \label{sec:lit}
The notion of static regret is very common and there are many well known algorithms for controlling it. For example, when the decision set has bounded diameter and the functions have bounded gradients in the Euclidean norm,  Online Gradient Descent (OGD) with appropriately chosen step size can yield a static regret of $O(\sqrt{T})$ and $O(\log T)$ for convex and strongly convex losses respectively. When the losses are exp-concave, Online Newton Step (ONS) attains regret bounds of $O(d \log T)$ \citep{hazan2007logregret}. For controlling static regret in arbitrary norms, one may use Online Mirror Descent (see for eg. \citep{bubeck2015ConvexOA}). Parameter free versions of static regret minimizing algorithms have been proposed in the works of \cite{Orabona2016CoinBA,Cutkosky2018BlackBoxRF}. \cite{Ross2013NormalizedOL, Luo2016Sketch} propose algorithms to control the static regret for competing against bounded linear predictors.

It is well known that to attain sub-linear dynamic regret, one must impose some regularities on the comparator sequence or the loss function sequence.
\cite{zinkevich2003online} shows that OGD can be used to attain $O(\sqrt{T}(1+V_T))$ dynamic regret. This has been improved by \cite{zhang2018adaptive} to $O(\sqrt{T(1+V_T)})$ which is minimax optimal when the loss functions are convex. 

When the losses are strongly convex and if we restrict our comparators to be the sequence of unique minimizers $\theta^*_{t} = \argmin_{\bs x \in \cD} f_t(\bs x)$, one may define a path variational $V_T^* = \sum_{t=2}^{T} \| \theta^*_{t} - \theta^*_{t-1}\|$. \cite{mokhtari2016dynamic} shows that OGD enjoys a dynamic regret of $O(1+V^*_T)$. Though this can be used to upper bound the dynamic regret against any comparators in Eq.\eqref{eq:d-regret}, such an upper bound can be very vacuous. 

The variational in Eq.\eqref{eq:c_t} is introduced by \cite{besbes2015non} and they propose a restarted  OGD procedure to yield dynamic regret of $O(T^{2/3}C_T^{1/3})$ and $O(\sqrt{TC_T \log T})$ for convex and strongly convex losses respectively. However, they require the apriori knowledge of the bound $C_T$ which may not be possible to obtain in practice. \cite{jadbabaie2015online} proposes a unifying strategy that yields dynamic regret bounds that simultaneously depend on $V_T$ and $C_T$ when the only condition on losses is convexity. 


There are a number of works related to non-parametric regression and online learning with RKHS. We only recall here the ones that are perhaps most relevant to our paper. The works of \cite{KRR,Jzquel2019EfficientOL} propose algorithms for kernel regression that control certain penalized regret (see Definition \ref{def:pen-regret}) when learning with squared error losses . A sequential bayesian strategy that controls the penalized regret with log losses has been proposed in \cite{bayesgp}. \cite{Zhang2015DivideAC} establishes lower bounds on static regret for kernel regression with squared error losses, however, the optimality of algorithms in  \cite{KRR,Jzquel2019EfficientOL} in terms of penalized regret is still unknown. The key difference is that in the static regret notion of \cite{Zhang2015DivideAC}, we are competing against a subset of functions with RKHS norms bounded by some \emph{known} radius while in the notion of penalized regret we are competing against the \emph{entire} RKHS (see Section \ref{sec:lb} for more details).
In this paper, we provide a positive result that the algorithms in \cite{KRR,Jzquel2019EfficientOL} are indeed nearly minimax optimal in terms of its penalized regret.

\section{Preliminaries} \label{sec:prelim}
The results in this paper hold for general Strongly Adaptive algorithms, but for concreteness we will phrase them in terms of a particular algorithm called Follow-the-Leading-History (FLH) \citep{hazan2007adaptive}.

\begin{figure}[h!]
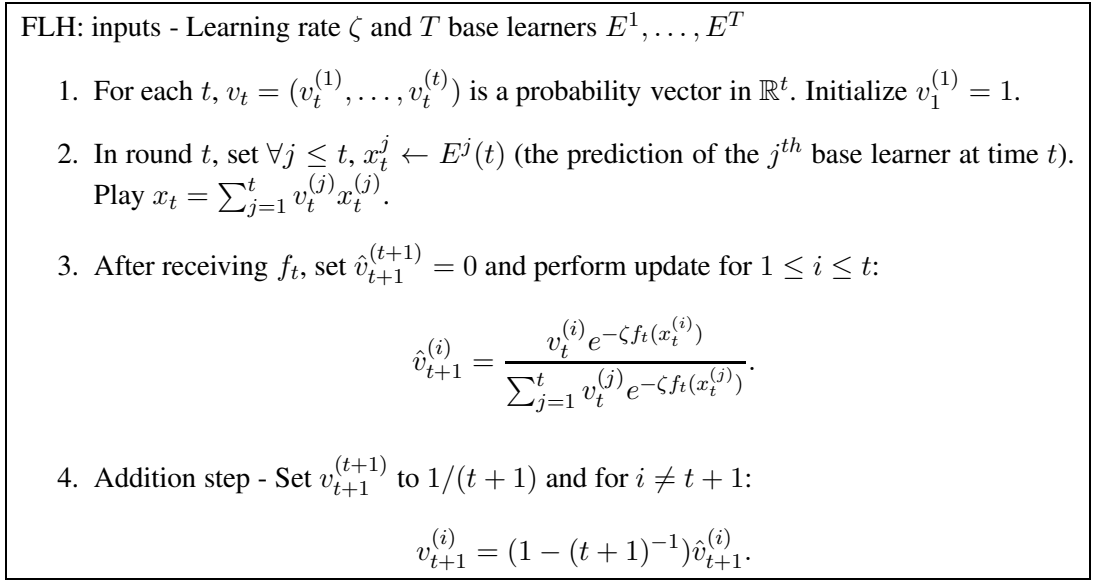

	\centering
	\fbox{
		\begin{minipage}{14 cm}
		FLH: inputs - Learning rate $\zeta$ and $T$ base learners $E^1,\ldots,E^T$
            \begin{enumerate}
                \item For each $t$, $v_t = (v_t^{(1)},\ldots,v_t^{(t)})$ is a probability vector in $\mathbb{R}^t$. Initialize $v_1^{(1)} = 1$.
                \item In round $t$, set $\forall j \le t$, $x_t^j \leftarrow E^j(t)$ (the prediction of the $j^{th}$ base learner at time $t$). Play $x_t =  \sum_{j=1}^t v_t^{(j)}x_t^{(j)}$.
                \item After receiving $f_t$, set $\hat v_{t+1}^{(t+1)} = 0$ and perform update for $1 \le i \le t$:
                \begin{align}
                    \hat v_{t+1}^{(i)}
                    &= \frac{v_t^{(i)}e^{-\zeta f_t(x_t^{(i)})}}{\sum_{j=1}^t v_t^{(j)}e^{-\zeta f_t(x_t^{(j)})}}.
                \end{align}
                \item Addition step - Set $v_{t+1}^{(t+1)}$ to $1/(t+1)$ and for $i \neq t+1$:
                \begin{align}
                    v_{t+1}^{(i)}
                    &= (1-(t+1)^{-1}) \hat v_{t+1}^{(i)}.
                \end{align}
            \end{enumerate}
		\end{minipage}
	}
	\caption{FLH algorithm}
	\label{fig:flh}
\end{figure}
We recall that a function $f_t$ is said to be $H$-strongly convex in the domain in the domain $\mathcal{D}$ if it satisfies 
\begin{equation}
\label{eqn:hsc}
f_t(\bs y) \ge f_t(\bs x) + (\bs y - \bs x)^T \nabla f_t(\bs x) + \frac{H}{2} \|\bs x - \bs y \|^2,
\end{equation}
for all $\bs x, \bs y \in \cD$. Further, $f_t$ is said to be $\alpha$-exp-concave if the last term in Eq.\eqref{eqn:hsc} is replaced by $\frac{\alpha}{2} \left( (\bs y - \bs x)^T \nabla f_t(\bs x) \right)^2.$
 
 


FLH enjoys the following guarantee against any base learner.
\begin{proposition}\label{prop:flh} \citep{hazan2007adaptive}
Suppose the loss functions are exp-concave with parameter $\alpha$. For any interval $I = [r,s]$ in time, the algorithm FLH with learning rate $\zeta = \alpha$ gives $O(\alpha^{-1}( \log r + \log |I|))$ regret against the base learner in hindsight.
\end{proposition}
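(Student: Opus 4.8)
The plan is to run the classical ``prediction with sleeping experts'' argument for exp-concave losses, but tracking only the single coordinate of the weight vector that FLH assigns to the base learner $E^r$. First I would record the mixability inequality for exp-concave losses: since $f_t$ is $\alpha$-exp-concave, the map $\bs x \mapsto e^{-\alpha f_t(\bs x)}$ is concave, so applying Jensen's inequality to the convex combination $x_t = \sum_{j=1}^t v_t^{(j)} x_t^{(j)}$ gives $e^{-\alpha f_t(x_t)} \ge \sum_{j=1}^t v_t^{(j)} e^{-\alpha f_t(x_t^{(j)})}$. With the prescribed choice $\zeta = \alpha$, the right-hand side is exactly the normalizing constant $Z_t := \sum_{j=1}^t v_t^{(j)} e^{-\zeta f_t(x_t^{(j)})}$ appearing in step 3 of FLH, so $Z_t \le e^{-\alpha f_t(x_t)}$.

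Next I would follow the coordinate $v_t^{(r)}$ through rounds $t = r, r+1, \dots, s$. The addition step initializes it at $v_r^{(r)} = 1/r$. For every $t \ge r$ (so in particular $r \ne t+1$), combining steps 3 and 4 of FLH with the bound $Z_t \le e^{-\alpha f_t(x_t)}$ gives
\begin{equation}
v_{t+1}^{(r)} = \Bigl(1 - \tfrac{1}{t+1}\Bigr)\frac{v_t^{(r)} e^{-\alpha f_t(x_t^{(r)})}}{Z_t} \ge \Bigl(1 - \tfrac{1}{t+1}\Bigr) v_t^{(r)}\, e^{\alpha\bigl(f_t(x_t) - f_t(x_t^{(r)})\bigr)}.
\end{equation}
Multiplying these for $t = r,\dots,s$ and telescoping $\prod_{t=r}^{s}\bigl(1-\tfrac1{t+1}\bigr) = \prod_{t=r}^{s}\tfrac{t}{t+1} = \tfrac{r}{s+1}$, the factor $v_r^{(r)} = 1/r$ cancels the head of the product and I obtain
\begin{equation}
v_{s+1}^{(r)} \ge \frac{1}{s+1}\exp\!\Bigl(\alpha \textstyle\sum_{t=r}^{s}\bigl(f_t(x_t) - f_t(x_t^{(r)})\bigr)\Bigr).
\end{equation}

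Finally, since $v_{s+1}^{(r)}$ is a coordinate of a probability vector it is at most $1$, so $\sum_{t=r}^{s}\bigl(f_t(x_t) - f_t(x_t^{(r)})\bigr) \le \alpha^{-1}\log(s+1)$. Writing $|I| = s - r + 1$ we have $s+1 = r + |I| \le r|I| + 1 \le 2r|I|$ (using $r,|I| \ge 1$), hence $\log(s+1) \le \log 2 + \log r + \log|I|$, which gives the claimed $O\bigl(\alpha^{-1}(\log r + \log |I|)\bigr)$ regret of the FLH iterates against $E^r$ on $I = [r,s]$; note $x_t^{(r)} = E^r(t)$, and the vector $v_{s+1}$ is the one FLH forms after round $s$ (still well defined by steps 3--4 even if $s = T$).

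Most of this is mechanical telescoping, so the place I expect to need the most care is the mixability step: reconciling the paper's first-order/quadratic definition of $\alpha$-exp-concavity with concavity of $e^{-\alpha f_t}$, which is what actually licenses the Jensen inequality above. A secondary bookkeeping hazard is the ``addition step'' indexing --- one must check that $v_r^{(r)}$ is exactly $1/r$ so that it cancels the leading term $\tfrac{r}{s+1}$ of the telescoped product, and that the mixing factor $(1 - \tfrac1{t+1})$ never touches coordinate $r$ for $t \ge r$. Everything else reduces to the one-line estimate $s+1 \le 2r|I|$.
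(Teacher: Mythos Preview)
Your argument is correct and is essentially the standard proof from \cite{hazan2007adaptive}; the present paper does not give its own proof of this proposition but simply cites that reference, so there is nothing further to compare against. Your flagged concern about reconciling the paper's quadratic lower-bound definition of $\alpha$-exp-concavity with the concavity of $e^{-\alpha f_t}$ (which is what the Jensen step actually needs) is well taken: the cited result in \cite{hazan2007adaptive} is stated under the latter, standard definition, and that is the one your mixability inequality uses.
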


For the case of exp-concave losses, one can maintain base learners $E^1,\ldots,E^T$ in Fig.\ref{fig:flh} as ONS algorithms that start at time points $1,\ldots,T$. Since each ONS instance achieves an $O(d \log T)$ static regret, Proposition \ref{prop:flh} implies that the corresponding FLH with $\zeta = \alpha$ attains $O(d \log T)$ static regret in any interval. 

Losses that are $H$-strongly convex and $G$-Lipschitz are known to be $O(H/G^2)$ exp-concave \citep{hazan2007logregret}. Further OGD attains $O(\log T)$ static regret. Hence FLH with OGD base learners and $\zeta = H/G^2$ can yield an $O(\log T)$ static regret in any interval when the loss functions are $H$-strongly convex. In Definition \ref{def:sa} if we restrict to minimax optimality wrt to interval length, these observations give rise to the following proposition.

\begin{proposition}
FLH algorithm in Fig.\ref{fig:flh} with base learners as OGD and ONS are Strongly Adaptive when the losses are strongly convex and exp-concave respectively.
\end{proposition}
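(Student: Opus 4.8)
The plan is a two-term decomposition that composes the static regret of FLH \emph{against a base learner} (Proposition \ref{prop:flh}) with the static regret \emph{of that base learner} on the interval of interest. First I would reduce the strongly convex case to the exp-concave case: since an $H$-strongly convex, $G$-Lipschitz loss is $\Theta(H/G^2)$-exp-concave, running FLH with $\zeta = H/G^2$ on such losses is a special case of FLH with $\zeta=\alpha$ on $\alpha$-exp-concave losses with $\alpha=\Theta(H/G^2)$, the OGD base learners playing the same structural role as the ONS base learners. Thus it suffices to treat a single abstract situation: $\alpha$-exp-concave losses, FLH with $\zeta=\alpha$, and base learners $E^1,\dots,E^T$ whose static regret over any window of length $m$ (when launched at the start of that window) is $O(\rho\log m)$ --- then specialize $\rho=d$ (ONS / exp-concave) and $\rho=G^2/H$ (OGD / strongly convex).

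Fix an arbitrary interval $I=[r,s]\subseteq[T]$ and let $\bs u\in\argmin_{\bs x\in\cD}\sum_{t=r}^s f_t(\bs x)$ be a best fixed comparator on $I$. Writing $x_t$ for the play of FLH and $x_t^{(r)}=E^r(t)$ for the $r$-th base learner's prediction, I would split
\begin{align}
\sum_{t=r}^s\big(f_t(x_t)-f_t(\bs u)\big)
&=\underbrace{\sum_{t=r}^s\big(f_t(x_t)-f_t(x_t^{(r)})\big)}_{(\mathrm{A})}\;+\;\underbrace{\sum_{t=r}^s\big(f_t(x_t^{(r)})-f_t(\bs u)\big)}_{(\mathrm{B})}.
\end{align}
Term $(\mathrm{A})$ is exactly the regret of FLH against the base learner $E^r$ over $I$, which Proposition \ref{prop:flh} bounds by $O(\alpha^{-1}(\log r+\log|I|))=O(\alpha^{-1}\log T)$. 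Term $(\mathrm{B})$ is the static regret of $E^r$ alone; since $E^r$ is launched at time $r$, over the $|I|$ rounds of $I$ the ONS (resp.\ OGD) analysis gives $(\mathrm{B})=O(\rho\log|I|)$.

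Combining, the static regret of FLH on any interval $I$ with $|I|\ge2$ is $O(\alpha^{-1}\log T+\rho\log|I|)$. With $R^*(|I|)=\Theta(\rho\log|I|)$ denoting the minimax static regret as a function of interval length (so $R^*(|I|)=\Theta(d\log|I|)$ for exp-concave and $R^*(|I|)=\Theta(\log|I|)$ for strongly convex losses), and using $\log|I|\ge\log2$, both summands are $O(\text{poly}(\log T)\,R^*(|I|))$ --- the second trivially, the first because $\alpha^{-1}\log T=O\!\big(\tfrac{\log T}{\alpha\rho}\big)\cdot\rho\log|I|$; the degenerate $|I|=1$ case is absorbed by boundedness of the losses. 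Hence FLH with ONS (resp.\ OGD) base learners is Strongly Adaptive in the sense of Definition \ref{def:sa}, restricted to minimax optimality in interval length, for exp-concave (resp.\ strongly convex) losses.

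The only point needing care --- more a subtlety than an obstacle --- is that term $(\mathrm{B})$ must be controlled by $O(\rho\log|I|)$ rather than $O(\rho\log T)$: this is precisely why FLH maintains a \emph{fresh} base learner $E^r$ launched at each time $r$, so that the comparison on $I$ is routed through a learner that has itself been running for only $|I|$ rounds and whose logarithmic regret is therefore in $\log|I|$. One also needs the standing boundedness assumptions (and, for the strongly convex case, $G$-Lipschitzness, to invoke exp-concavity) under which Proposition \ref{prop:flh} and the ONS/OGD regret bounds hold.
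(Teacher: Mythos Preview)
Your proposal is correct and follows essentially the same approach as the paper: the paper's argument (given in the paragraph preceding the proposition rather than as a formal proof) is exactly the two-term decomposition you describe --- Proposition~\ref{prop:flh} to bound FLH's regret against the base learner $E^r$ on $I$, plus the $O(\rho\log|I|)$ static regret of the freshly launched base learner, combined with the exp-concavity of strongly convex Lipschitz losses to unify the two cases. Your write-up is in fact more careful than the paper's informal discussion (you track $\log|I|$ versus $\log T$ explicitly and handle the $|I|=1$ case), but the underlying idea is identical.
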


\section{Dynamic regret for strongly convex and exp-concave losses} \label{sec:finite}
We start by showing that SA methods can serve as a principled way of achieving dynamic regret rates (up to log factors) of \cite{yuan2019dynamic}. We assume that the loss functions are Lipschitz in the decision set.

\begin{assumption} \label{as:lip}
The loss functions $f_t$ satisfy $|f(\bs x) - f(\bs y)| \le G \|\bs x - \bs y \|$ for all $\bs x, \bs y \in \cD$.
\end{assumption}

\subsection{Strongly convex losses}
In this section we derive dynamic regret rates when the loss functions are $H$-strongly convex. We show that by appropriately instantiating the base learners in FLH, one can control the dynamic regret rates. The unspecified proofs are provided in the Appendix.

\begin{theorem} \label{thm:sc}
Suppose the loss function $f_t$ are $H$-strongly convex loss and satisfy Assumption \ref{as:lip}. Running FLH with learning rate $\zeta = H/G^2$ and  base learners as online gradient descent (OGD) with step size $\eta_t = 1/Ht$ results in a dynamic regret of $\tilde O\left (\sqrt{TV_T} \vee 1 \right)$, where $\tilde O(\cdot)$ hides dependence on constants $H,G$ and poly-logarithmic factors of $T$.
\end{theorem}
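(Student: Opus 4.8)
The plan is a block-decomposition reduction in which the algorithm is held fixed and the block length enters \emph{only} in the analysis; this is exactly what delivers adaptivity to the unknown $V_T$. Fix any integer $\Delta\in\{1,\ldots,T\}$ and cut $[T]$ into $M=\lceil T/\Delta\rceil$ consecutive intervals $I_1,\ldots,I_M$ of length at most $\Delta$, with $I_m$ starting at time $s_m$ and ending at $e_m$. Writing the dynamic regret as $\sum_{m}\sum_{t\in I_m}\big(f_t(\bs x_t)-f_t(\bs z_t)\big)$, I would interpolate through the fixed comparator $\bs z_{s_m}$ inside each block:
\[
\sum_{t\in I_m}\!\big(f_t(\bs x_t)-f_t(\bs z_t)\big)=\underbrace{\sum_{t\in I_m}\!\big(f_t(\bs x_t)-f_t(\bs z_{s_m})\big)}_{A_m}+\underbrace{\sum_{t\in I_m}\!\big(f_t(\bs z_{s_m})-f_t(\bs z_t)\big)}_{B_m}.
\]

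The term $A_m$ is simply the \emph{static} regret of FLH on the interval $I_m$ against a fixed point. Since $H$-strongly convex, $G$-Lipschitz losses are $(H/G^2)$-exp-concave and FLH is run with $\zeta=H/G^2$ and OGD base learners, the strong adaptivity established in Section~\ref{sec:prelim} gives $A_m=O\big((G^2/H)\log T\big)$; concretely, Proposition~\ref{prop:flh} bounds the regret of FLH against the base learner $E^{s_m}$ on $I_m$ by $O\big((G^2/H)(\log s_m+\log|I_m|)\big)$, and $E^{s_m}$ being OGD started at $s_m$ has $O\big((G^2/H)\log|I_m|\big)$ static regret against the fixed point $\bs z_{s_m}$ on $[s_m,e_m]$. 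For $B_m$ I would use only Assumption~\ref{as:lip} and the triangle inequality: $f_t(\bs z_{s_m})-f_t(\bs z_t)\le G\|\bs z_{s_m}-\bs z_t\|\le G\sum_{\tau=s_m+1}^{t}\|\bs z_\tau-\bs z_{\tau-1}\|\le G\,V_{I_m}$ where $V_{I_m}:=\sum_{\tau=s_m+1}^{e_m}\|\bs z_\tau-\bs z_{\tau-1}\|$, so $B_m\le|I_m|\,G\,V_{I_m}\le\Delta\,G\,V_{I_m}$. Since the index ranges $[s_m+1,e_m]$ are disjoint, $\sum_m V_{I_m}\le V_T$, and summing over blocks gives
\[
R_T(\bs z_1,\ldots,\bs z_T)\;\le\;\Big(\tfrac{T}{\Delta}+1\Big)\,O\!\Big(\tfrac{G^2}{H}\log T\Big)\;+\;G\,\Delta\,V_T.
\]
Finally I would optimize the free parameter: taking $\Delta$ to be (the integer nearest to) $\sqrt{(G^2/H)\,T\log T/(G\,V_T)}$ truncated to $[1,T]$ balances the two terms and yields $R_T=\tilde O(\sqrt{TV_T})$; the two truncation regimes (very small $V_T$ forcing $\Delta=T$ and giving a poly-log bound, very large $V_T$ forcing $\Delta=1$ so that every $B_m$ vanishes) are absorbed into the constant and $\mathrm{polylog}(T)$ factors hidden by $\tilde O$, which is where the $\vee 1$ comes from.

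There is no serious obstacle once strong adaptivity is in hand: the whole point is that the partition length $\Delta$ is an artifact of the proof, never seen by the algorithm, so one may pick the $V_T$-dependent value after the fact — precisely why no prior knowledge of $V_T$ is needed, and why uniform control of static regret on \emph{all} intervals is exactly the right hypothesis. The only items requiring a little care are (i) checking that the OGD base learner with step sizes $\eta_t=1/(Ht)$ really has $O\big((G^2/H)\log T\big)$ regret on an interval beginning at its own start time, which is a routine telescoping of the strongly convex descent inequality $\|\bs x_{t+1}-\bs z\|^2\le(1-\eta_t H)\|\bs x_t-\bs z\|^2+\eta_t^2 G^2$, and (ii) the book-keeping of the boundary increments $\|\bs z_{s_{m+1}}-\bs z_{e_m}\|$ so that $\sum_m V_{I_m}$ is genuinely dominated by $V_T$. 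The same template applies verbatim, with ONS base learners and Proposition~\ref{prop:flh} supplying $O\big((d/\alpha)\log T\big)$ interval regret, to the exp-concave case of Theorem~\ref{thm:ec}.
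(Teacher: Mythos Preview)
Your proof is correct and shares the paper's high-level template — partition $[T]$, invoke strong adaptivity for the static regret on each piece, control the comparator drift via Lipschitzness, then optimize the free partition parameter — but the decomposition itself differs. The paper (Lemma~\ref{lem:part}) partitions \emph{greedily by path variation}: each bin is grown until its internal variation reaches a tunable threshold $\tilde V$, which gives $M=O(V_T/\tilde V)$ bins and a drift term $O(GT\tilde V)$ in Lemma~\ref{lem:main1}, and the proof then sets $\tilde V=\sqrt{V_T/T}$. You instead partition by a \emph{uniform block length} $\Delta$, obtaining $M=O(T/\Delta)$ blocks and a drift term $G\Delta V_T$, and optimize $\Delta$. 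The two trade-offs are dual (bin count controlled by $V_T/\tilde V$ versus $T/\Delta$, drift by $T\tilde V$ versus $\Delta V_T$) and yield the same $\tilde O(\sqrt{TV_T}\vee 1)$ rate after optimization; your uniform scheme is a touch more elementary since it avoids the greedy construction, while the paper's variation-based partition makes the dependence of the bin count on $V_T$ explicit, but neither buys anything the other cannot for the applications in this paper.
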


We start with some useful lemmas for proving this theorem. In Lemma \ref{lem:part}, we divide the time horizon into various bins such that the path variation of the comparator sequence incurred within these bins is at-most a quantity that will be tuned later. In Lemma \ref{lem:main1}, for each bin, we bound the dynamic regret by the sum of static regret against the first comparator point within the bin and a term that captures the drift of the remaining sequence of comparator points from the first point. 
\begin{lemma} \label{lem:part}
Let $\tilde V > 0$ be a constant. There exists a partitioning $\cP$ of the sequence $\bs z_1,\ldots,\bs z_t$ into $M$ bins viz $\{ \{[i_s,i_e]\}_{i=1}^{M}\}$ such that:
\begin{enumerate}
    \item $M := |\cP| = O\left(\max\{ V_T/\tilde V, 1 \} \right)$.
    \item For all $[i_s,i_e] \in \cP$ with $i_s < i_e$, $\sum_{j=i_s+1}^{i_e} \| \bs z_{j} - \bs z_{j-1}\| \le \tilde V$.
\end{enumerate}
\end{lemma}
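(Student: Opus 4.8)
The plan is to construct the partition greedily by sweeping through the comparator sequence from left to right and cutting off a new bin exactly when the accumulated path variation within the current bin is about to exceed the threshold $\tilde V$. Concretely, I would start the first bin at index $1$, keep a running sum of $\|\bs z_j - \bs z_{j-1}\|$ as $j$ increases, and whenever adding the next increment would push that running sum strictly above $\tilde V$, I close the current bin at the previous index and open a fresh bin. By construction, each bin $[i_s, i_e]$ with $i_s < i_e$ satisfies $\sum_{j=i_s+1}^{i_e} \|\bs z_j - \bs z_{j-1}\| \le \tilde V$, which is exactly property 2. (A minor edge case: a single jump $\|\bs z_j - \bs z_{j-1}\|$ could itself exceed $\tilde V$; in that case I would let that jump sit as the boundary between two bins so that it is never counted inside any bin, or equivalently allow singleton bins, so property 2 is vacuous or satisfied for such bins.)

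The second step is to bound the number of bins $M$. The key observation is that, apart from possibly the last bin, every bin that was closed was closed because its internal path variation plus the next increment exceeded $\tilde V$; hence the total variation "charged" to each completed bin (together with the boundary increment leading out of it) is at least $\tilde V$. Since the increments across all bins sum to at most $V_T$, there can be at most $V_T/\tilde V$ such completed bins, giving $M \le V_T/\tilde V + 1 = O(\max\{V_T/\tilde V, 1\})$. This is property 1. I would make the charging argument precise by associating to each bin except the last the quantity $\sum_{j \in \text{bin}} \|\bs z_j - \bs z_{j-1}\| + (\text{the single increment crossing into the next bin})$, noting these quantities are each $> \tilde V$ (by the stopping rule) and that they are almost disjoint as subsets of the increments $\{\|\bs z_j - \bs z_{j-1}\|\}_{j=2}^T$ — each increment is used in at most two such sums, which only costs a factor of $2$ and does not affect the $O(\cdot)$ bound.

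There is no real obstacle here; this is a standard "doubling/bucketing along a sum" argument. The only thing to be careful about is bookkeeping the boundary increments so that the counting in property 1 and the containment in property 2 are mutually consistent — in particular making sure that the increment that triggers a cut is excluded from the bin whose threshold it would have violated, so that property 2 holds as an inequality rather than as a near-miss. Writing the greedy rule with that convention settles both points simultaneously, so I would present the construction once and then read off properties 1 and 2 as immediate consequences.
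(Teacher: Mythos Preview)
Your greedy partitioning scheme and the charging argument for bounding $M$ are essentially identical to the paper's proof. One small simplification: with your convention (and the paper's), the charges $\sum_{j=i_s+1}^{i_e+1}\|\bs z_j-\bs z_{j-1}\|$ are in fact pairwise disjoint across bins (the next bin's internal variation begins at $j=i_e+2$), so no factor of $2$ is needed and one gets $V_T \ge (M-1)\tilde V$ directly.
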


\begin{lemma} \label{lem:main1}
Assume that the losses are $\alpha$-exp-concave. Let $\bs x_t$ be the predictions made by FLH with learning rate set as $\zeta  = \alpha$. Let $R(L)$ be the static regret incurred by the base learners in an interval of length $L$. Let $\cP$ be the partition of $[T]$ produced in Lemma \ref{lem:part}. Represent each element in the partition by $[i_s,i_e], \: i = 1,\ldots, M$. Then we have,
\begin{align}
    \sum_{t=1}^{T} f_t(\bs x_t) - f_t(\bs z_t)
    &\le  \tilde O \left( \inf_{\tilde V: \frac{V_T}{\tilde V} \ge 1} \left( V_T/\tilde V + GT \tilde V +  \sum_{i=1}^{M} \sum_{t=i_s}^{i_e} R(i_e-i_s+1) \right) \vee R(T)   \right) \label{eq:lem-main}
\end{align}
\end{lemma}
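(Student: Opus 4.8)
\noindent\textbf{Proof plan for Lemma~\ref{lem:main1}.}
The plan is to establish a per-bin decomposition of the dynamic regret and then sum over the bins of the partition $\cP$ supplied by Lemma~\ref{lem:part}. Fix a bin $[i_s,i_e]\in\cP$ and let $\bs x_t^{(i_s)} := E^{i_s}(t)$ denote the prediction at round $t$ of the base learner that FLH introduces at round $i_s$ (an ONS instance in the exp-concave setting, an OGD instance in Theorem~\ref{thm:sc}). For every $t\in[i_s,i_e]$ I would split
\begin{align}
f_t(\bs x_t)-f_t(\bs z_t)
&= \big(f_t(\bs x_t)-f_t(\bs x_t^{(i_s)})\big)
 + \big(f_t(\bs x_t^{(i_s)})-f_t(\bs z_{i_s})\big)
 + \big(f_t(\bs z_{i_s})-f_t(\bs z_t)\big),
\end{align}
and control the three groups after summing over $t\in[i_s,i_e]$.

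Summed over $t\in[i_s,i_e]$, the first group is exactly the regret of FLH relative to the base learner $E^{i_s}$ on the interval $[i_s,i_e]$; since $E^{i_s}$ is in FLH's pool from round $i_s$ onward, Proposition~\ref{prop:flh} (with $\zeta=\alpha$) bounds it by $O(\alpha^{-1}(\log i_s+\log(i_e-i_s+1)))=\tilde O(1)$. The second group, summed, is the static regret of $E^{i_s}$ --- which has been running for the whole interval $[i_s,i_e]$ --- against the single fixed point $\bs z_{i_s}$, hence at most $R(i_e-i_s+1)$. For the third group, Assumption~\ref{as:lip}, the triangle inequality, and property~2 of Lemma~\ref{lem:part} give $|f_t(\bs z_{i_s})-f_t(\bs z_t)|\le G\sum_{j=i_s+1}^{t}\|\bs z_j-\bs z_{j-1}\|\le G\tilde V$ for each $t\in[i_s,i_e]$, so this group contributes at most $G(i_e-i_s+1)\tilde V$. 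Summing over the $M$ bins, the first group yields $O(\alpha^{-1}M\log T)$, the second $\sum_{i=1}^{M}R(i_e-i_s+1)$, and the third $G\tilde V\sum_{i=1}^{M}(i_e-i_s+1)=GT\tilde V$ since $\cP$ partitions $[T]$. By property~1 of Lemma~\ref{lem:part}, whenever $V_T/\tilde V\ge 1$ we have $M=O(V_T/\tilde V)$, so the first contribution is $\tilde O(V_T/\tilde V)$; taking the infimum over admissible $\tilde V$ produces the first branch of the stated bound (and $\sum_i R(i_e-i_s+1)\le\sum_i\sum_{t=i_s}^{i_e}R(i_e-i_s+1)$ recovers the displayed form, since $R\ge0$ and each bin has length at least one). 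The degenerate regime --- in the limit a constant comparator sequence, where no admissible $\tilde V$ exists --- is handled separately by comparing FLH directly with the single base learner $E^1$ over all of $[T]$: Proposition~\ref{prop:flh} gives $\tilde O(1)$ there, and $E^1$'s static regret against the constant comparator is $R(T)$, which is the origin of the $\vee\, R(T)$ term.

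The argument is largely mechanical; the one point needing care is the choice made in the first step: the comparison inside each bin must be anchored to the base learner \emph{introduced at the left endpoint} $i_s$, not to the best base learner in hindsight. That is the only anchor for which Proposition~\ref{prop:flh} yields a poly-logarithmic bound on the first group while, simultaneously, the anchored learner has been running since the very start of the bin and therefore enjoys the full static-regret guarantee $R(i_e-i_s+1)$ used on the second group. Telescoping the comparator drift and summing $\log i_s\le\log T$ across the $M$ bins are then routine.
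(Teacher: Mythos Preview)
Your proof is correct and follows essentially the same route as the paper: anchor each bin at the base learner started at $i_s$, apply the FLH meta-regret bound (Proposition~\ref{prop:flh}) for the first term, the base learner's static regret $R(i_e-i_s+1)$ for the second, and Lipschitzness plus the $\tilde V$-bound on the per-bin path variation for the third, then sum over bins and optimize over $\tilde V$. Your handling of the degenerate constant-comparator case and of the double-sum $\sum_i\sum_t R(\cdot)$ in the displayed statement is also fine.
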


\begin{proof}\textbf{of Theorem} \ref{thm:sc}.
We assume the notations in Lemma \ref{lem:main1}. An $H$-strongly convex loss is $H/G^2$ exp-concave in the decision set $\cD$ \citep{hazan2007logregret}. Further, when the losses are strongly convex, from Theorem 1 of \cite{hazan2007logregret} we have $R(T) = O(\log T)$ for OGD with step size $\eta_t = 1/Ht$. Hence by Lemmas \ref{lem:part} and \ref{lem:main1} we have,
\begin{align}
    \sum_{t=1}^{T} f_t(\bs x_t) - f_t(\bs z_t)
    &\le \tilde O \left(V_T/\tilde V + GT \tilde V +  \sum_{i=1}^{M} \sum_{t=i_s}^{i_e} R(i_e-i_s+1)  \right),\\
    &\le \tilde O \left(V_T/\tilde V + GT \tilde V +  \sum_{i=1}^{M} \log T  \right),\\
    &\le \tilde O \left(V_T/\tilde V + GT \tilde V \right), \label{eq:dyn-sc}
\end{align}
whenever $V_T/\tilde V \ge 1$.

Assume that $V_T \ge 1/T$. In this setting, if we choose $\tilde V = \sqrt{V/T}$, we have $V/\tilde V \ge 1$. Plugging this value to Eq. \eqref{eq:dyn-sc} yields a dynamic regret of $\tilde O(\sqrt{TV})$. 

When $V_T = O(1/T)$, then we have,
\begin{align}
    \sum_{t=1}^{T} f_t(\bs x_t) - f_t(\bs z_t)
    &= \sum_{t=1}^{T} f_t(\bs x_t) - f_t(\bs z_1) + \sum_{t=1}^{T} f_t(\bs z_1) - f_t(\bs z_t),\\
    &\le_{(a)} O(\log T) + GTV_T,\\
    &\le_{(b)} O(\log T),
\end{align}
where line (a) is by strong adaptivity of FLH and Lispchitzness of $f_t$. Line (b) is by the assumption $V_T = O(1/T)$. Combining both cases now yields the theorem.
\end{proof}

\subsection{Exp-concave losses}
In this section, we assume that the losses are $\alpha$-exp-concave and the domain is bounded. Specifically:

\begin{assumption} \label{as:ec}
There exists a constant $D$ such that $\max_{\bs x, \bs y \in \cD} \|\bs x - \bs y \| \le D$.
\end{assumption}

We have the following Theorem.
\begin{theorem} \label{thm:ec}
Suppose the losses $f_t$ are $\alpha$-exp-concave and satisfy Assumptions \ref{as:lip} and \ref{as:ec}. Running FLH with learning rate $\zeta = \alpha$ and ONS as base learners results in a dynamic regret of $\tilde O\left ( \sqrt{dTV} \vee d  \right)$, where $\tilde O(\cdot)$ hides dependence on constants $G,D,\alpha$ and poly-logarithmic factors of $T$.
\end{theorem}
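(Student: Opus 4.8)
The plan is to run the argument of Theorem~\ref{thm:sc} essentially verbatim, replacing the strongly convex base-learner guarantee with the exp-concave one and bookkeeping an extra factor of $d$.

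First I would pin down the interval regret of the overall algorithm. Under Assumptions~\ref{as:lip} and~\ref{as:ec}, an ONS instance started at an arbitrary round $r$ and run on $\alpha$-exp-concave losses incurs $O\!\big((d/\alpha + GD)\log L\big) = \tilde O(d)$ static regret over the next $L$ rounds; combined with Proposition~\ref{prop:flh} (using $\zeta = \alpha$), FLH with these base learners then satisfies $\sum_{t\in I}\big(f_t(\bs x_t) - f_t(\bs u)\big) = \tilde O(d)$ on every interval $I \subseteq [T]$ and every fixed $\bs u \in \cD$. In the language of Lemma~\ref{lem:main1} this is exactly $R(L) = \tilde O(d)$ for all $L \le T$, and in particular $R(T) = \tilde O(d)$.

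Next I would invoke Lemma~\ref{lem:main1} with exp-concavity parameter $\alpha$ and $R(\cdot) = \tilde O(d)$, and then perform the same simplification as in the proof of Theorem~\ref{thm:sc} — using that the partition of Lemma~\ref{lem:part} has only $M = O(\max\{V_T/\tilde V,1\})$ bins, so the base-learner contribution is $\tilde O(dM) = \tilde O(d\,V_T/\tilde V)$ — to arrive at
\[
\sum_{t=1}^{T}\big(f_t(\bs x_t) - f_t(\bs z_t)\big) \;\le\; \tilde O\!\left(\frac{d\,V_T}{\tilde V} + G T\,\tilde V\right)
\]
for every $\tilde V$ with $V_T/\tilde V \ge 1$. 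Balancing the two terms with $\tilde V \asymp \sqrt{d\,V_T/(GT)}$ gives dynamic regret $\tilde O(\sqrt{d T V_T})$, and this choice is admissible precisely when $V_T = \Omega(d/T)$. For the remaining regime $V_T = O(d/T)$ I would instead split
\[
\sum_{t=1}^{T}\big(f_t(\bs x_t) - f_t(\bs z_t)\big) = \sum_{t=1}^{T}\big(f_t(\bs x_t) - f_t(\bs z_1)\big) + \sum_{t=1}^{T}\big(f_t(\bs z_1) - f_t(\bs z_t)\big),
\]
bound the first sum by $\tilde O(d)$ via strong adaptivity ($R(T) = \tilde O(d)$) and the second by $G\sum_{t=1}^{T}\|\bs z_1 - \bs z_t\| \le G T V_T = O(d)$ using Assumption~\ref{as:lip} and the triangle inequality $\|\bs z_1 - \bs z_t\| \le V_T$. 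Combining the two regimes yields $\tilde O(\sqrt{d T V_T} \vee d)$, as claimed.

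Most of this is mechanical given the scaffolding already built for Theorem~\ref{thm:sc}; the two places that actually need care are (i) checking that the ONS (hence FLH) interval regret is genuinely $O(d\log T)$ under only Lipschitzness and a bounded domain — this is precisely where Assumptions~\ref{as:lip} and~\ref{as:ec} enter — and (ii) propagating the factor $d$ consistently, so that it surfaces both inside $\sqrt{d T V_T}$ and in the threshold $V_T \asymp d/T$ separating the large- and small-variation regimes. I do not anticipate a substantive obstacle beyond this bookkeeping.
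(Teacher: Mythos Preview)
Your proposal is correct and follows the same approach as the paper, which simply says that Theorem~\ref{thm:ec} follows by plugging the ONS static regret bound $R(L) = O(d\log T)$ into the proof of Theorem~\ref{thm:sc}. You have filled in the details faithfully, including the appropriate adjustment of the threshold from $V_T \asymp 1/T$ to $V_T \asymp d/T$ that is needed once the $d$ factor is tracked through the optimization over $\tilde V$.
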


Theorem 2 of \cite{hazan2007logregret}, provides $O(d \log T)$ static regret for ONS under Assumptions \ref{as:lip} and \ref{as:ec}. Theorem \ref{thm:ec} follows by plugging in this static regret guarantee in the arguments of the proof of Theorem \ref{thm:sc}

We conclude this section by two remarks that are applicable to every dynamic regret guarantee presented throughout the paper.

\begin{remark} \label{rem:runtime}
Let $\tau$ be the running time of OGD per round. The FLH procedure incurs a run-time of $O(\tau T)$ per round. This can be improved to $O(\tau \log T)$ by using the AFLH procedure of \cite{hazan2007adaptive} at the cost of increasing the dynamic regret by a logarithmic factor in time horizon $T$.
\end{remark}

\begin{remark}
The FLH procedure doesn't require to know an apriori bound on $V_T$ ahead of time. Hence the dynamic regret in Theorems \ref{thm:sc} and \ref{thm:ec} is adaptive to the variation $V_T$.
\end{remark}

\section{Extensions} \label{sec:ext}
In this section, we demonstrate the versatility of SA methods by deriving new dynamic regret guarantees in various interesting settings.

\subsection{Dynamic regret against bounded linear predictors}
Consider the following learning protocol:

\begin{itemize}
    \item For $t = 1,\ldots,T$:
    \begin{enumerate}
        \item Adversary reveals a feature vector $\bs v_t \in \mathbb{R}^d$.
        \item Learner chooses $\bs w_t \in \mathbb{R}^d$ and predict $\bs w_t^T \bs v_t$.
        \item Adversary reveals a loss $f_t(\bs w) := \ell_t(\bs w^T \bs v_t)$.
        \item Learner suffers loss $\ell_t(\bs w_t^T \bs v_t)$.
    \end{enumerate}
\end{itemize}

Under the above protocol, most of the OCO algorithms typically minimize the regret against a set of benchmark weights (where each weight define a linear predictor) that is bounded in some norm (e.g., the Euclidean $\|\cdot\|_2$ norm). In this section, we follow the path in \cite{Ross2013NormalizedOL,Luo2016Sketch} and study dynamic regret against a set of weights that rather produce bounded predictions. Specifically, define $\mathcal K_t := \{\bs w:  |\bs w^T \bs v_t| \le B\}$. We aim to compete with a benchmark of linear predictors:
\begin{align}
    \mathcal K
    &= \cap_{t=1}^T \mathcal K_t,\\
    &= \{\bs w: \forall t \in [T], |\bs w^T \bs v_t| \le B \},
\end{align}
which basically defines a set of weights that outputs predictions in $[-B,B]$ at the given feature set. As noted in \cite{Luo2016Sketch}, the benchmark set $\mathcal K$ can be much larger than an L2 norm ball. The set $\mathcal K$ is often more useful in practice than a set of weights with bounded norm since it is more easier to choose a reasonable interval of predictions rather than choosing a bound on perhaps non-interpretable norm of the weights. We have the following dynamic regret guarantee.

\begin{theorem} \label{thm:ec-inv}
Suppose the losses $f_t$ are $\alpha$-exp-concave and satisfy Assumption \ref{as:lip}. Further assume that $\ell_t$ are Lipschitz smooth. Running FLH with learning rate $\zeta = \alpha$ and invariant ONS algorithm from \cite{Luo2016Sketch} as base learners results in a dynamic regret of $\tilde O\left ( d\sqrt{TV} \vee d^2  \right)$, where $\tilde O(\cdot)$ hides dependence on constants $G,\alpha$ and poly-logarithmic factors of $T$.
\end{theorem}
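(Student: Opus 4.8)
The plan is to mirror the proof of Theorem \ref{thm:sc} (equivalently Theorem \ref{thm:ec}), changing only the base learner. First I would isolate the single external fact that is needed: each base learner $E^j$ in Figure \ref{fig:flh} is an instance of the scale-invariant ONS of \cite{Luo2016Sketch} started at time $j$, and under $\alpha$-exp-concavity of the $f_t$ together with Lipschitz smoothness of the $\ell_t$, such an instance has static regret $\tilde O(d^2)$ against \emph{every} $\bs w \in \mathcal K$ on any interval, where the hidden constants depend on $B$, $\alpha$, $G$ and the smoothness constant of $\ell_t$ but --- crucially --- not on $\mathrm{diam}(\mathcal K)$, which may be arbitrarily large or infinite. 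Feeding these as the base learners of FLH with $\zeta = \alpha$, Proposition \ref{prop:flh} ensures FLH tracks the base learner launched at the left endpoint of any interval $I$ up to an additive $\tilde O(\alpha^{-1})$ term, so the composite algorithm has interval static-regret profile $R(L) = \tilde O(d^2)$ for every $|I| = L$, against every comparator in $\mathcal K$.

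With this profile in hand the rest of the argument is essentially a repeat of the proof of Theorem \ref{thm:sc}. I would apply Lemma \ref{lem:part} to split $[T]$ into $M = O(\max\{V_T/\tilde V,1\})$ bins of intra-bin path variation at most $\tilde V$, and then Lemma \ref{lem:main1}, whose only inputs are exp-concavity (for FLH), the $G$-Lipschitzness of $f_t$ from Assumption \ref{as:lip} --- used to bound the per-round intra-bin drift by $G\tilde V$, since the triangle inequality gives $\|\bs z_j - \bs z_{i_s}\| \le \tilde V$ inside a bin --- and the profile $R(\cdot)$. Since each of the $M = O(V_T/\tilde V)$ bins contributes $\tilde O(d^2)$ exactly as in the proof of Theorem \ref{thm:sc}, this yields, for $V_T/\tilde V \ge 1$,
\[
  \sum_{t=1}^{T} f_t(\bs x_t) - f_t(\bs z_t)
  \;\le\; \tilde O\!\left( \frac{V_T}{\tilde V} + GT\tilde V + d^2\,\frac{V_T}{\tilde V} \right)
  \;=\; \tilde O\!\left( (1+d^2)\,\frac{V_T}{\tilde V} + GT\tilde V \right).
\]
Optimizing at $\tilde V = \sqrt{(1+d^2)V_T/(GT)}$ gives $\tilde O(d\sqrt{TV_T})$, valid whenever $V_T/\tilde V \ge 1$, i.e. $V_T \ge (1+d^2)/(GT)$. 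For the remaining regime $V_T = O(d^2/T)$, I would write $\sum_t (f_t(\bs x_t) - f_t(\bs z_t)) = \sum_t (f_t(\bs x_t) - f_t(\bs z_1)) + \sum_t (f_t(\bs z_1) - f_t(\bs z_t))$, bound the first sum by the composite algorithm's static regret over $[1,T]$ against $\bs z_1 \in \mathcal K$, which is $\tilde O(d^2)$, and the second by $GTV_T = \tilde O(d^2)$ via Lipschitzness; combining the two regimes gives the claimed $\tilde O(d\sqrt{TV} \vee d^2)$.

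The only real obstacle lies in the first step --- verifying that the invariant ONS of \cite{Luo2016Sketch} slots into this pipeline cleanly. One has to check that it is a legitimate restartable FLH base learner for losses of the form $\bs w \mapsto \ell_t(\bs w^\top \bs v_t)$; that its regret guarantee holds against the \emph{entire} benchmark $\mathcal K = \cap_{t=1}^T \{\bs w : |\bs w^\top \bs v_t| \le B\}$, not merely an $L^2$-ball, with constants free of $\mathrm{diam}(\mathcal K)$ --- this is exactly where Lipschitz smoothness of $\ell_t$ is used, since it lets one relate the first-order surrogate to squared prediction errors in the analysis of \cite{Luo2016Sketch} --- and that the static-regret exponent obtained in this setting is the $\tilde O(d^2)$ that produces the $d\sqrt{TV}$ and $d^2$ terms. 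Once that interval bound is established, the binning, the drift estimate and the tuning of $\tilde V$ are identical to the finite-dimensional strongly-convex and exp-concave cases and require no new idea.
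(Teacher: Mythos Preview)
Your proposal is correct and follows essentially the same route as the paper: the paper's proof is the one-line remark that Theorem~4 of \cite{Luo2016Sketch} gives $O(d^2\log T)$ static regret for invariant ONS, and that plugging this $R(L)=\tilde O(d^2)$ into the argument of Theorem~\ref{thm:sc} (i.e., Lemmas~\ref{lem:part}--\ref{lem:main1}, then tuning $\tilde V$ and handling the small-$V_T$ regime separately) yields the claim. Your write-up is in fact more explicit than the paper's, including the correct optimization $\tilde V \asymp d\sqrt{V_T/T}$ and the matching threshold $V_T = O(d^2/T)$ for the static-regret fallback.
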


Theorem 4 of \cite{Luo2016Sketch}, provides $O(d^2 \log T)$ static regret for a variant of ONS when $\ell_t$ are Lipschitz and $f_t$ are exp-concave. Theorem \ref{thm:ec-inv} follows by plugging in this static regret guarantee in the arguments of the proof of Theorem \ref{thm:sc},

The dynamic regret bounds of \cite{yuan2019dynamic} are derived under the assumption that the norm of the elements in the benchmark set is bounded by some known constant. Specifically, the dynamic regret bound of \cite{yuan2019dynamic} grows as $O(D\sqrt{dTV_T})$ where $D$ is the maximum $L^2$ norm of a predictor in the benchmark set. With benchmark set being $\mathcal K$, this $D$ can be prohibitively large. In this case the \emph{diameter independent} regret guarantee in Theorem \ref{thm:ec-inv} can be much smaller. To the best of our knowledge this is the first time a diameter independent regret guarantee has been proposed for controlling the dynamic regret in terms of $V_T$ when the losses are exp-concave.

\subsection{Dynamic regret for regression against a function space} \label{sec:rkhs}

In this section, we derive dynamic regret guarantees for competing against a sequence of functions in an RKHS induced by the Gaussian kernel. We study a regression setup where the loss is measured using squared errors. Specifically we consider the protocol in Fig.\ref{fig:sq}

\begin{figure}[h!]
	\centering
	\fbox{
		\begin{minipage}{14 cm}
            \begin{enumerate}
            \item For time $t=1,\ldots,T$:
            \begin{enumerate}
                \item Receive $\bs x_t \in \mathbb{R}^d$.
                \item Learner predicts $\hat y_t \in \mathbb{R}$.
                \item Adversary reveals a label $y_t \in [-B,B]$.
                \item Player suffers a loss of $(y_t - \hat y_t)^2$.
            \end{enumerate}
            \end{enumerate}
		\end{minipage}
	}
	\caption{\emph{Interaction protocol with squared error losses.}}
	\label{fig:sq}
\end{figure}

\textbf{Setup and notations.} We represent each function in the RKHS $\cH_k$ by a weight vector $\bs w \in \mathbb{R}^D$ where $D$ can be possible infinite. Let $\bs x \in \mathbb{R}^d$. For a function $f_{\bs w}(\bs x) \in \cH_k$, we have $f_{\bs w}(\bs x) = \bs w^T \bs \phi(\bs x)$ where $\bs \phi(\bs x) \in \mathbb{R}^D$ is the feature embedding of the vector $\bs x$ induced by the Kernel function $k:\mathbb{R}^d \times \mathbb{R}^d \rightarrow \mathbb{R}$. We consider the gaussian kernel where $k(\bs x, \bs y) = \exp(-\| \bs x - \bs y \|^2/(2\sigma^2))$ for some bandwidth parameter $\sigma$. The RKHS norm of the function $f_{\bs w}$ which corresponds to the weight vector $\bs w$ is denoted by $\|\bs w\| := \| f_{\bs w}\|_{\cH_k}$. We denote the determinant of a matrix $\bs A$ by $|\bs A |$.

For a sequence of comparator functions $f_{\bs w_1},\ldots,f_{\bs w_T} \in \cH_{k}$, define the path variational as
\begin{align}
    V_T = \sum_{t=2}^{T} \|f_{\bs w_t} - f_{\bs w_{t-1}} \|_{\cH_k}.
\end{align}

We are interested in controlling the dynamic regret,
\begin{align}
    \sum_{t=1}^{T} (\hat y_t - y_t)^2 - (f_{\bs w_t}(\bs x_t) - y_t)^2,
\end{align}
for a sequence of functions $f_{\bs w_t}$ that belong to the class of functions with bounded RKHS norm defined as $\cD = \{\bs w: \| \bs w\| \le B \}$. For any $\bs w \in \cD$, since $|\bs w^T \phi(\bs x_t)| \le \|\bs w \| \| \phi(\bs x_t)\| = \|\bs w \| \sqrt{k(\bs x_t,\bs x_t)} \le B$ and $|y_t| \le B$, we have that the losses $\ell_t(\bs w) := (\bs w^T \phi(\bs x_t) - y_t)^2$ are $1/(8B^2)$ exp-concave in the domain $\cD$ \citep{hazan2007logregret}. Specifically, for all $\bs u, \bs v \in \cD$, we have

\begin{align}
    \ell_t(\bs v) \ge \ell_t(\bs u) + (\bs v - \bs u)^T \nabla \ell_t(u) + \frac{\alpha}{2} \left( (\bs v - \bs u)^T \nabla \ell_t(u) \right)^2, \label{eq:ec-rkhs}
\end{align}
where $ \alpha = 1/(8B^2)$.

The following theorem (proof deferred to Appendix) controls the dynamic regret in the above prediction framework.

\begin{theorem} \label{thm:sq-dyn}
Assume that the comparator function sequence obeys $\| f_{\bs w_t} \|_{\cH_k} \le B$  and labels obey $|y_t| \le B$ for all $t \in [T]$. Running FLH with learning rate $\zeta = 1/(8 B^2)$ and base learners as PKAWV from \cite{Jzquel2019EfficientOL} with parameter $\lambda = 1$ and basis functions that approximate Gaussian kernel yields a dynamic regret:
\begin{align}
    \sum_{t=1}^{T} (\hat y_t - y_t)^2 - (f_{\bs w_t}(\bs x_t) - y_t)^2
    &\le O \left( (\log T)^{\frac{d+1}{2}} \sqrt{TV_T} \vee (\log T)^{\frac{d+1}{2}}\right).
\end{align}
\end{theorem}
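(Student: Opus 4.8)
\textbf{Proof proposal for Theorem~\ref{thm:sq-dyn}.}
The plan is to run exactly the reduction behind Theorem~\ref{thm:sc} --- Lemmas~\ref{lem:part} and \ref{lem:main1} --- with the base learners $E^1,\dots,E^T$ of Figure~\ref{fig:flh} taken to be fresh copies of PKAWV (each $E^j$ started at round $j$), and to feed in PKAWV's static regret for the Gaussian kernel in place of the $O(\log L)$ bound used for OGD. First I would verify the template hypotheses. Every comparator $f_{\bs w_t}$ has $\|\bs w_t\|\le B$ and, since $\|\bs \phi(\bs x_t)\|=\sqrt{k(\bs x_t,\bs x_t)}=1$ for the Gaussian kernel, its prediction $\bs w_t^T\bs \phi(\bs x_t)$ lies in $[-B,B]$; clipping every base learner's prediction to $[-B,B]$ only decreases its loss (as $|y_t|\le B$), so every prediction FLH aggregates lies in $[-B,B]$ too. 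On that range $z\mapsto(z-y_t)^2$ is $\tfrac{1}{8B^2}$-exp-concave (the computation recorded in Eq.~\eqref{eq:ec-rkhs}) and $4B$-Lipschitz, so Assumption~\ref{as:lip} holds with $G=4B$ on $\cD=\{\bs w:\|\bs w\|\le B\}$ and Lemma~\ref{lem:main1} applies with $\alpha=1/(8B^2)=\zeta$.

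Next I would assemble the per-interval static regret of FLH+PKAWV. By \cite{Jzquel2019EfficientOL}, over its first $L$ rounds PKAWV with $\lambda=1$ has regret against any $f\in\cH_k$ at most $\tfrac12\|f\|^2+\tfrac{B^2}{2}\log|\bs I+\bs K_L|$ plus an approximation term made negligible by using enough Gaussian-approximating basis functions; and for the Gaussian kernel $\log|\bs I+\bs K_L|=O((\log L)^{d+1})$ by its eigenvalue decay. Hence, running a fresh PKAWV on each interval, the base-learner static regret satisfies $R(L)=O((\log L)^{d+1})$ (constants $B,\sigma$ suppressed). Composing with Proposition~\ref{prop:flh}, which adds $O(\alpha^{-1}(\log i_s+\log|I|))=\tilde O(B^2)$ for the copy anchored at $i_s$, the static regret of FLH+PKAWV against \emph{any} fixed $\bs w\in\cD$ on \emph{any} interval $I$ is $\tilde O((\log T)^{d+1})$.

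With these inputs the rest is the same balancing as in Theorem~\ref{thm:sc}. Given $\tilde V>0$, Lemma~\ref{lem:part} splits $\bs w_1,\dots,\bs w_T$ into $M=O(\max\{V_T/\tilde V,1\})$ bins of internal variation at most $\tilde V$; on a bin $[i_s,i_e]$ one bounds $\sum_{t=i_s}^{i_e}\ell_t(\bs x_t)-\ell_t(\bs w_t)$ by the static regret against $\bs w_{i_s}$ (which is $\tilde O((\log T)^{d+1})$) plus the drift $\sum_{t=i_s}^{i_e}\ell_t(\bs w_{i_s})-\ell_t(\bs w_t)\le 4B(i_e-i_s+1)\tilde V$, using Lipschitzness and $\|\bs w_{i_s}-\bs w_t\|_{\cH_k}\le\tilde V$. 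Summing over the $M$ bins yields $\tilde O(\tfrac{V_T}{\tilde V}(\log T)^{d+1}+BT\tilde V)$ in the regime $V_T/\tilde V\ge1$; optimizing $\tilde V=\sqrt{V_T(\log T)^{d+1}/T}$ (legitimate once $\sqrt{TV_T}$ exceeds $(\log T)^{\frac{d+1}{2}}$ up to constants) gives $O((\log T)^{\frac{d+1}{2}}\sqrt{TV_T})$, the leading term. For smaller $V_T$ one uses a single bin: the dynamic regret is then at most the static regret of FLH+PKAWV against $\bs w_1$ plus the total drift $GTV_T$, i.e.\ of order $R(T)$, which is the additive contribution and is dominated by the first term once $V_T$ passes the threshold above.

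I expect the real work to lie on the kernel side, not in the reduction, which is a verbatim re-use of the Theorem~\ref{thm:sc} template. The two loci are: (i) invoking PKAWV's guarantee in a form uniform over sub-intervals --- immediate, since a fresh copy is used on each interval, exactly as OGD and ONS are used for Theorems~\ref{thm:sc}--\ref{thm:ec}; and (ii) the bound $\log|\bs I+\bs K_L|=O((\log L)^{d+1})$ for the Gaussian kernel, together with the bookkeeping that the PKAWV projection error stays lower-order (this is where any needed regularity of the inputs $\bs x_t$ enters). A minor but genuine subtlety is that PKAWV is improper: it never plays a point of $\cD$, so the per-interval static regret must be routed through the decomposition ``FLH against PKAWV'' plus ``PKAWV against a fixed function'', and exp-concavity must be used only on the clipped range $[-B,B]$.
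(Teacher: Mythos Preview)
Your proposal is correct and follows essentially the same route as the paper: verify exp-concavity and Lipschitzness of the squared loss on the bounded-prediction range, invoke PKAWV's static regret together with the Gaussian-kernel bound $\log|\bs I+\bs K_L|=O((\log L)^{d+1})$ (the paper obtains this via the effective-dimension chain $\log|\bs I+\bs K|\le d_{\mathrm{eff}}(1+\log(1+T))$ and $d_{\mathrm{eff}}=O((\log T)^d)$), and then plug $R(L)=O((\log T)^{d+1})$ into the Lemma~\ref{lem:part}/\ref{lem:main1} template exactly as in Theorem~\ref{thm:sc}. Your explicit choice $\tilde V=\sqrt{V_T(\log T)^{d+1}/T}$ and your remarks on clipping and the improper nature of PKAWV are minor elaborations the paper leaves implicit, not a different argument.
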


By Theorem 4 of \cite{Jzquel2019EfficientOL}, computational complexity of PKAWV run with the configurations in Theorem \ref{thm:sq-dyn} is $O((\log T)^{2d})$ per round. Hence by Remark \ref{rem:runtime} the runtime of the strategy in Theorem \ref{thm:sq-dyn} is $O(T (\log T)^{2d})$ per iteration, and improves to $O((\log T)^{2d+1})$ via AFLH.

\section{A lower bound for online kernel regression} \label{sec:lb}

In this section we deviate from the framework of the sections above in that we consider penalized regret rather than dynamic regret. In the context of the interaction protocol in Fig. \ref{fig:sq}, various algorithms such as clipped KRR \citep{KRR} are known to provide certain penalized regret guarantees, but it remained an open question whether these guarantees are optimal. In this section we will prove that they are, up to logarithmic terms.

We shall now, for completeness, recall the definition of penalized regret and the penalized regret guarantee of clipped KRR. This guarantee is similar to the one achieved by KAWV algorithm of \citep{Jzquel2019EfficientOL}.
\begin{definition} \label{def:pen-regret}
For any prediction prediction strategy with outputs $\hat y_1,\ldots,\hat y_T$, the penalized regret against an RKHS $\cH_k$, induced by a kernel function $k:\mathbb{R}^d \times \mathbb{R}^d \rightarrow \mathbb{R}$ is defined as
\begin{align}
    R_{T,a} = \sum_{t=1}^T (\hat y_t - y_t)^2 - \inf_{f \in \cH_k} \left( \sum_{t=1}^T (f(\bs x_t) - y_t)^2 + a\| f\|_{\cH_k}^2 \right),
\end{align}
where $a > 0$ is a fixed parameter.
\end{definition}

This notion of regret is a standard metric in online learning dating back to at-least \cite{Herbster2001TrackingTB}. It penalizes the comparator for selecting functions with large RKHS norm.

\begin{proposition} \citep{KRR} \label{prop:krr}
Let $a > 0$ be a constant parameter and let $\hat{y}_t \in \mathbb{R}$ be the predictions of clipped KRR algorithm when run with parameter $a$ and a given kernel function $k:\mathbb{R}^d \times \mathbb{R}^d \rightarrow \mathbb{R}$. Let $\bs K \in \mathbb{R}^{T \times T}$ be the kernel evaluation matrix with $\bs K_{i,j} = k(\bs x_i, \bs x_j)$. Then,
\begin{align}
    R_{T,a}
    &\le 4B^2 \log \left | \bs I + \frac{1}{a}\bs K \right|,
\end{align}
where $\cH_k$ is the RKHS induced by the kernel $k$ and $|\bs A |$ denotes the determinant of a matrix $\bs A$.
\end{proposition}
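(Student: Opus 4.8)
This is a known penalized-regret bound for clipped Kernel Ridge Regression (equivalently, the kernelized Vovk--Azoury--Warmuth forecaster), so the plan is to reconstruct the standard log-determinant potential argument inside the RKHS. First I would reduce to a finite-dimensional problem: every quantity in $R_{T,a}$ — the clipped-KRR predictions and the infimum $\inf_{f\in\cH_k}$ — sees $f$ only through the values $f(\bs x_1),\dots,f(\bs x_T)$ and through $\|f\|_{\cH_k}^2$, and replacing $f$ by its orthogonal projection onto $V:=\spa\{\bs\phi(\bs x_1),\dots,\bs\phi(\bs x_T)\}$ leaves the data-fit terms unchanged while not increasing the norm. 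So I work in $\R^m$ with $m=\dim V\le T$, writing $\phi_t$ for $\bs\phi(\bs x_t)$, $\bs w$ for a comparator weight, and $\Phi=[\phi_1\ \cdots\ \phi_T]$, so that $\bs K=\Phi^\top\Phi$.

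Next I set up the potential. Let $A_t=a\bs I+\sum_{s\le t}\phi_s\phi_s^\top$ (so $A_0=a\bs I$ is invertible since $a>0$), $b_t=\sum_{s\le t}y_s\phi_s$, and $L_t^{*}:=\min_{\bs w}\big(\sum_{s\le t}(\phi_s^\top\bs w-y_s)^2+a\|\bs w\|^2\big)=\sum_{s\le t}y_s^2-b_t^\top A_t^{-1}b_t$; note $L_0^{*}=0$ and $L_T^{*}=\inf_{f\in\cH_k}\big(\sum_t(f(\bs x_t)-y_t)^2+a\|f\|_{\cH_k}^2\big)$. The KRR mean is $\mu_t=\phi_t^\top A_{t-1}^{-1}b_{t-1}$ and clipped KRR plays $\hat y_t=\max(-B,\min(B,\mu_t))$. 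The engine of the argument is the exact identity
\begin{equation}
 L_t^{*}-L_{t-1}^{*}=\frac{(\mu_t-y_t)^2}{1+\rho_t},\qquad \rho_t:=\phi_t^\top A_{t-1}^{-1}\phi_t,
\end{equation}
which I would prove by substituting $A_t=A_{t-1}+\phi_t\phi_t^\top$ and $b_t=b_{t-1}+y_t\phi_t$ into the formula for $L_t^{*}$ and applying Sherman--Morrison; combined with the matrix determinant lemma $\det A_t=\det A_{t-1}\,(1+\rho_t)$, this links the learner's loss to a log-determinant increment.

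The crux is the per-round inequality
\begin{equation}
 (\hat y_t-y_t)^2-(L_t^{*}-L_{t-1}^{*})\ \le\ 4B^2\,\frac{\rho_t}{1+\rho_t}\ \le\ 4B^2\log\frac{\det A_t}{\det A_{t-1}},
\end{equation}
the second step being $x/(1+x)\le\log(1+x)$. This is where clipping is essential, since $\rho_t$ need not be small and $\mu_t$ is unbounded. I would argue by cases on whether $\mu_t$ was clipped: if $|\mu_t|\le B$ then $\hat y_t=\mu_t$, $(\hat y_t-y_t)^2\le 4B^2$, and the claim is immediate from the identity above; if, say, $\mu_t>B$, then $\hat y_t=B$ and $(\mu_t-y_t)^2\ge(B-y_t)^2=(\hat y_t-y_t)^2$ (valid as $y_t\le B<\mu_t$), whence $(1+\rho_t)(\hat y_t-y_t)^2-(\mu_t-y_t)^2\le\rho_t(B-y_t)^2\le 4B^2\rho_t$, which rearranges to the claim.

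Finally, summing over $t=1,\dots,T$ and telescoping both the $L_t^{*}$ sequence and the $\log\det A_t$ sequence gives $\sum_t(\hat y_t-y_t)^2\le L_T^{*}-L_0^{*}+4B^2\log(\det A_T/\det A_0)$, i.e. $R_{T,a}\le 4B^2\log\det\!\big(\bs I+\tfrac1a\sum_t\phi_t\phi_t^\top\big)$; Sylvester's determinant identity then converts $\det(\bs I_m+\tfrac1a\Phi\Phi^\top)$ into $\det(\bs I_T+\tfrac1a\Phi^\top\Phi)=\big|\bs I+\tfrac1a\bs K\big|$, which is the stated bound. I expect the only genuinely delicate point to be the clipped per-round inequality with the correct constant $4B^2$; the identity in the second paragraph and the final determinant manipulation are routine linear algebra.
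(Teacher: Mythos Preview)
The paper does not prove Proposition~\ref{prop:krr}; it is quoted as a known result from the cited reference \cite{KRR} (and the analogous bound for KAWV from \cite{Jzquel2019EfficientOL}), and is used only as a black box to motivate the lower bound in Section~\ref{sec:lb}. So there is no in-paper argument to compare against.

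That said, your reconstruction is correct and is precisely the standard route. The three ingredients you isolate --- the exact telescoping identity $L_t^{*}-L_{t-1}^{*}=(\mu_t-y_t)^2/(1+\rho_t)$ obtained via Sherman--Morrison, the clipped per-round bound $(\hat y_t-y_t)^2-(L_t^{*}-L_{t-1}^{*})\le 4B^2\rho_t/(1+\rho_t)$, and the matrix-determinant/Sylvester conversion to $\log|\bs I+\tfrac1a\bs K|$ --- are exactly how the original reference proceeds. Your case analysis for the clipped step is clean and yields the right constant $4B^2$: in the unclipped case one uses $|\mu_t-y_t|\le 2B$, and in the clipped case $|\hat y_t-y_t|\le 2B$ together with $(\mu_t-y_t)^2\ge(\hat y_t-y_t)^2$. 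The only cosmetic remark is that the representer-theorem reduction to $\spa\{\bs\phi(\bs x_t)\}$ at the start, while valid, is not strictly needed: one can run the entire potential argument with $A_t$ an operator on $\cH_k$ (it is a trace-class perturbation of $a\,\mathrm{Id}$), and Sylvester's identity still collapses the Fredholm determinant to the $T\times T$ Gram determinant. Either way the conclusion is the same.
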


We remark that regret guarantee of Proposition  \ref{prop:krr}  implies that for any function $f \in \cH_k$, with $\| f\|_{\cH_k} \le C$, clipped KRR guarantees
\begin{align}
    \sum_{t=1}^T (\hat y_t - y_t)^2 - (f(\bs x_t) - y_t)^2
    &\le aC^2 +  4B^2 \log \left | \bs I + \frac{1}{a} \bs K \right|. \label{eq:pen-regret}
\end{align}

It is mentioned in \cite{Jzquel2019EfficientOL} that the regret bound in Eq.\eqref{eq:pen-regret} is minimax optimal for any $f$ with RKHS norm bounded by the some \emph{known} radius $C$, provided the parameter $a$ is chosen by minimizing RHS. However, in the notion of penalized regret, the parameter $a$ is \emph{fixed} and we are competing against the entire RKHS $\cH_k$ with no apriori restrictions on the smoothness $\| f\|_{\cH_k}$. Hence the optimality arguments for competing against functions whose RKHS norm is bounded by a known constant doesn't directly translate to the optimality in terms of penalized regret.

To establish the optimality of clipped KRR or KAWV for competing against the entire RKHS in a fully adversarial setting, one must lower bound their penalized regret. For the case of polynomial kernels we have the feature maps $\phi(\bs x) \in \mathbb{R}^D$, where $\bs x \in \mathbb{R}^d$ with $D = O(d^2)$. So the problem reduces to online linear regression. In this case, the penalized regret with parameter $a = 1$ grows as $O(D \log T)$ \citep{bayesgp} and a nearly matching lower bound for the penalized regret is shown in \cite{vovk2001} (or see Theorem 11.9 in \citep{BianchiBook2006}). In what follows, we develop a lower bound on the penalized regret for arbitrary Mercer kernels that establishes the near optimality (modulo logarithmic factors) of KRR and KAWV for competing against the entire function class $\cH_k$.

\begin{theorem} \label{thm:lb}
Consider the interaction protocol in Fig.\ref{fig:sq}. There exists a choice of $B = \Theta(\sqrt{\log T})$, $\bs x_t \in \mathbb{R}^d$ and $y_t \in [-B,B]$ such that for any algorithm, the penalized regret with parameter $a > 0$ is lower bounded by
\begin{align}
    \log \left | \bs I + \frac{1}{a} \bs K \right| + 2 \log \left( 1 - \frac{1}{T} \right),
\end{align}
for all $T > 1$.
\end{theorem}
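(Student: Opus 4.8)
The first move is to strip away the RKHS and reduce $R_{T,a}$ to a matrix quantity. By the closed form of kernel ridge regression (equivalently, the representer theorem), for any label vector $\bs y=(y_1,\dots,y_T)^\top$ one has $\inf_{f\in\cH_k}\bigl(\sum_{t}(f(\bs x_t)-y_t)^2+a\|f\|_{\cH_k}^2\bigr)=\bs y^\top(\bs I+\tfrac1a\bs K)^{-1}\bs y$. Writing $\bs A:=\bs I+\tfrac1a\bs K\succeq\bs I$, this turns the penalized regret into $R_{T,a}=\sum_{t=1}^T(\hat y_t-y_t)^2-\bs y^\top\bs A^{-1}\bs y$, which depends on the kernel only through $\bs A$. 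This is exactly the finite dimensional online regression game analysed by \cite{vovk2001} (see also Theorem 11.9 in \cite{BianchiBook2006}), with $\bs K$ playing the role of the design Gram matrix, and the plan is to carry out the RKHS analogue of that Bayesian lower bound.

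\textbf{The idealised computation.} Choose $T$ distinct, well separated points $\bs x_1,\dots,\bs x_T$ so that $\bs K$ is close to diagonal (for the Gaussian kernel this is achieved by spacing them $\Theta(\sqrt{\log T})$ apart, and then $\log|\bs A|=\Theta(T)$). Let the adversary draw $\bs y\sim\cN(\bs 0,\bs A)$. For any (possibly randomised) algorithm, conditioning on the past gives $\E[(\hat y_t-y_t)^2\mid y_{1:t-1}]\ge\Var(y_t\mid y_{1:t-1})=d_t$, where $d_t$ is the $t$-th pivot of the $\bs L\bs D\bs L^\top$ factorisation of $\bs A$, i.e.\ the Schur complement $\bs A_{tt}-\bs A_{t,1:t-1}(\bs A_{1:t-1})^{-1}\bs A_{1:t-1,t}$. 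Since the subtracted term equals $\tfrac1{a^2}\bs K_{t,1:t-1}(\bs I+\tfrac1a\bs K_{1:t-1})^{-1}\bs K_{1:t-1,t}\le\tfrac1a\,k(\bs x_t,\bs x_t)=\bs A_{tt}-1$ (it is at most the squared norm of the projection of $\phi(\bs x_t)$ onto the previous feature vectors), each $d_t\ge1$; hence $\E[\sum_t(\hat y_t-y_t)^2]\ge\sum_t d_t=T+\sum_t(d_t-1)\ge T+\sum_t\log d_t=T+\log|\bs A|$ using $x-1\ge\log x$. On the comparator side $\E[\bs y^\top\bs A^{-1}\bs y]=\tr(\bs A^{-1}\bs A)=T$, so $\E[R_{T,a}]\ge\log|\bs A|=\log|\bs I+\tfrac1a\bs K|$.

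\textbf{Enforcing bounded labels.} The adversary above is unbounded, so we replace $\bs y$ by the adapted, $[-B,B]^T$-valued sequence $\tilde y_t:=y_t\,\mathbf 1\{\max_{s\le t}|y_s|\le B\}$, where $B=\Theta(\sqrt{\log T})$ is taken with a large enough constant that $\P(\cE)\ge1-1/T$ for $\cE:=\{\max_t|y_t|\le B\}$ --- possible because the coordinate variances $\bs A_{tt}=1+\tfrac1a k(\bs x_t,\bs x_t)$ are bounded constants and a union bound over $T$ Gaussian tails activates at scale $\sqrt{\log T}$. Since $\tilde{\bs y}$ is always a legal label sequence, $\sup_{\bs y\in[-B,B]^T}R_{T,a}\ge\E[R_{T,a}(\tilde{\bs y})]$, and one re-runs the previous paragraph for $\tilde{\bs y}$: on $\cE$ it agrees with the Gaussian, so the summed conditional variance term and the comparator term are as before, and on $\cE^c$ (probability $\le1/T$) the deficits are controlled by Gaussian moments. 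This last point is precisely why the points must be well separated: it keeps every conditional mean bounded (say $\le B/2$), so that clipping a coordinate to zero does not collapse its conditional variance. Carrying these $\cE^c$-corrections along --- a $\log\P(\cE)$-type term in the comparator estimate and one more in the conditional variance estimate --- yields the stated additive correction $2\log(1-1/T)$.

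\textbf{Where the difficulty lies.} Outside the boundedness step the argument is routine linear algebra. The real work is the truncation bookkeeping: producing a hard instance whose labels genuinely lie in $[-B,B]$ with $B=\Theta(\sqrt{\log T})$ while keeping the coefficient $1$ in front of $\log|\bs I+\tfrac1a\bs K|$; checking that clipping never reverses the direction of the key inequalities (a priori $\tilde{\bs y}^\top\bs A^{-1}\tilde{\bs y}$ could exceed $\bs y^\top\bs A^{-1}\bs y$ on $\cE^c$ because $\bs A^{-1}$ has off-diagonal entries, and the truncated conditional variance could dip below $d_t$ if a conditional mean were large); and reducing the residual to the explicit $2\log(1-1/T)$ rather than a cruder $O(1)$ loss.
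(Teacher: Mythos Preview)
Your route is genuinely different from the paper's. The paper does not argue with conditional variances at all: it reduces squared loss to a log-loss game via $\|\bs y-\bs\theta\|^2=-2\log\cN(\bs y\mid\bs\theta,\bs I)+\text{const}$, places a prior $q(\theta)\propto\exp(-\tfrac{a}{2}\|\theta\|_{\cH_k}^2)$, identifies the offline optimum as the MAP estimator, and evaluates the normaliser $\int_{\|\bs y\|_\infty\le B}Q(\bs y\mid\theta^{\mathrm{map}})\,q(\theta^{\mathrm{map}})\,d\bs y$ in closed form. That integral is a Gaussian mass, giving $\tfrac12\log|\bs I+\tfrac1a\bs K|+\log P(\|\bs z\|_\infty\le B)$ with $\bs z\sim\cN(\bs 0,\bs A)$; multiplying by the factor $2$ from the loss conversion produces $\log|\bs A|+2\log(1-1/T)$ directly, and the argument works for \emph{any} distinct $\bs x_t$. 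Your LDL-pivot approach (with $d_t-1\ge\log d_t$) is more elementary, avoids the log-loss detour, and in the unbounded regime is correct---indeed it gives the sharper $\sum_t(d_t-1)\ge\log|\bs A|$. What it buys is transparency; what it loses is the clean handling of bounded labels and the freedom in choosing the $\bs x_t$.

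The truncation step, however, has a real gap. You assert that the $\cE^c$-corrections are each ``a $\log P(\cE)$-type term'' summing to $2\log(1-1/T)$, but neither correction takes that form in your framework. On the comparator side the error in $\E[\tilde{\bs y}^\top\bs A^{-1}\tilde{\bs y}]$ is governed by $P(\cE^c)\cdot TB^2$ (using $\bs A^{-1}\preceq\bs I$); on the variance side the shortfall is $\sum_t d_t\,P(\cE_{t-1}^c)$ plus, on $\cE_{t-1}$, the deficit of $\Var(y_t\mathbf 1\{|y_t|\le B\}\mid y_{1:t-1})$ below $d_t$, none of which you have quantified. With a sufficiently large constant in $B=\Theta(\sqrt{\log T})$ (so $P(\cE^c)\le T^{-2}$) and well-separated points to keep conditional means small, these terms can all be driven to $o(1)$, so your approach can plausibly deliver $\log|\bs I+\tfrac1a\bs K|-o(1)$, which is morally the theorem. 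But the specific additive $2\log(1-1/T)$ is an artefact of the paper's Bayesian normalisation---it is literally $2\log P(\|\bs z\|_\infty\le B)$---and will not fall out of a moment-based truncation. To match the stated constant you would have to actually carry out the bookkeeping you flag as ``the real work'' and accept a different (still vanishing) correction, or graft on the paper's integral computation.
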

Our argument is non-constructive in the sense that we do not explicitly exhibit a sequence of labels satisfying the stated penalized regret lower bound. The proof is facilitated by a reduction to the problem of lower bounding the penalized regret in log loss games. We induce a prior such that a function $f \in \cH_k$ is given the weight of $\exp(-a \| f\|_{\cH_k}^2/2)$. We compute the MAP estimator with this prior and a Gaussian likelihood. The proof is completed by lower bounding the log loss regret of the MAP estimate.
\begin{remark}
Thus the lower bound in Theorem \ref{thm:lb} matches the upper bound in Proposition \ref{prop:krr} up to logarithmic factors in $T$ and a fast decaying additive term of $\log(1-1/T)$. Hence we conclude that the algorithms such as clipped KRR and KAWV are nearly minimax optimal in terms of their penalized regret.
\end{remark}

We start the formal proof of the lower bound with a technical lemma whose proof is in the Appendix.

\begin{lemma} \label{lem:prob-lb}
Let $\cX = \{\bs x_1, \ldots, \bs x_T \}$ be a given ordered set.
Consider the class of functions $\mathcal G = \{\theta(\bs x): \theta(\bs x) = \sum_{i=1}^{T} \alpha_i k(\bs x_i,\bs x) \}$, for a given kernel $k$. Define $\bs \theta_{\bs x_{1:T}} = [\theta(\bs x_1),\ldots,\theta(\bs x_T)]^T$. Fix a likelihood $Q(\bs y | \bs \theta_{\bs x_{1:T}}) = \cN(\bs \theta_{\bs x_{1:T}},\bs I_n)$, where $\bs y = [y_1,\ldots, y_n]^T$. Let $\kappa^2 = \max_{i \in [T]} k(\bs x_i,\bs x_i)$ and $B^2 = 2 (1+\kappa^2/a)\log T$. Then we have,
\begin{align}
    \inf_{P} \sup_{\bs y: \| \bs y\|_\infty \le B} - \log P(\bs y) - \inf_{\theta \in \mathcal G} \{ - \log Q(\bs y | \theta) + \| \theta\|^2_{\cH_k}\}
    &\ge \frac{1}{2} \log \left|\bs I + \frac{1}{a} \bs K \right | + \log \left( 1 - \frac{1}{T} \right),
\end{align}
for all $T > 1$
\end{lemma}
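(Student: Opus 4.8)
The plan is to read the left‑hand side as a minimax redundancy (log‑loss regret) for sequential density assignment against the penalised Gaussian family $\{Q(\cdot\mid\theta):\theta\in\mathcal G\}$, and to lower‑bound it by a Bayes quantity. Write $\Phi(\bs y):=\inf_{\theta\in\mathcal G}\{-\log Q(\bs y\mid\theta)+\|\theta\|^2_{\cH_k}\}$ and $S_B:=\{\bs y:\|\bs y\|_\infty\le B\}$. For \emph{any} probability measure $\mu$ supported on $S_B$, Gibbs' inequality gives $\inf_P\int(-\log P)\,d\mu=H(\mu)$, the differential entropy of $\mu$ (attained at $P=\mu$), and since $\mu$ is supported on $S_B$ and $\Phi$ does not depend on $P$,
\begin{align}
\inf_P\sup_{\bs y\in S_B}\big(-\log P(\bs y)-\Phi(\bs y)\big)\ \ge\ \inf_P\E_{\bs y\sim\mu}\big[-\log P(\bs y)-\Phi(\bs y)\big]\ =\ H(\mu)-\E_{\bs y\sim\mu}[\Phi(\bs y)].
\end{align}
So it suffices to exhibit one good $\mu$ and evaluate the right‑hand side.

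The natural choice is the Bayesian model from the proof sketch. Parametrise $\theta\in\mathcal G$ by $\bs\alpha$ via $\theta=\sum_i\alpha_i k(\bs x_i,\cdot)$, so that $\bs\theta_{\bs x_{1:T}}=\bs K\bs\alpha$ and $\|\theta\|^2_{\cH_k}=\bs\alpha^\top\bs K\bs\alpha$; put on $\theta$ the Gaussian prior whose log‑density is (up to a constant) a negative multiple of $\|\theta\|^2_{\cH_k}$, restricted to $\row(\bs K)$ if $\bs K$ is singular so that it is proper. Passing this through the likelihood $\bs y\mid\theta\sim\cN(\bs K\bs\alpha,\bs I)$, and with the scale of the prior chosen appropriately, the marginal law $\nu$ of $\bs y$ is the Gaussian $\cN\big(\bs 0,\ \bs I+\tfrac1a\bs K\big)$. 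Take $\mu:=\nu(\cdot\mid S_B)$, the conditioning of $\nu$ to the box, and set $p:=\P_\nu(S_B)$.

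The heart of the argument is a clean Gaussian (exact‑Laplace) identity: for this matched choice the redundancy of the Bayes mixture $\nu$ against the penalised family is independent of $\bs y$. Completing the square in the ridge problem defining $\Phi$ yields a closed form; subtracting it from $-\log q_\nu(\bs y)=\tfrac n2\log(2\pi)+\tfrac12\log|\bs I+\tfrac1a\bs K|+\tfrac12\bs y^\top(\bs I+\tfrac1a\bs K)^{-1}\bs y$ makes the quadratic‑in‑$\bs y$ parts cancel, leaving $-\log q_\nu(\bs y)-\Phi(\bs y)\equiv\half\log|\bs I+\tfrac1a\bs K|$. Unfolding the conditioning, $q_\mu=q_\nu\,\mathbf 1_{S_B}/p$, so
\begin{align}
H(\mu)-\E_{\bs y\sim\mu}[\Phi(\bs y)]\ =\ \frac1p\,\E_\nu\!\big[(-\log q_\nu(\bs y)-\Phi(\bs y))\,\mathbf 1_{S_B}\big]+\log p\ =\ \half\log\Big|\bs I+\tfrac1a\bs K\Big|+\log p.
\end{align}
Finally, the diagonal entries of $\bs I+\tfrac1a\bs K$ are at most $1+\kappa^2/a$, so a Gaussian tail bound together with a union bound over the $n$ coordinates and the choice $B^2=2(1+\kappa^2/a)\log T$ give $p\ge 1-1/T$, hence $\log p\ge\log(1-1/T)$. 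Chaining the three displays proves the lemma for all $T>1$.

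The step requiring the most care is the redundancy identity together with the matching of scales: one must align the multiple of $\|\theta\|^2_{\cH_k}$ that sits in the prior exponent, the penalty appearing in $\Phi$, and the target factor $\tfrac1a$, so that the $\bs y$‑dependent quadratic terms cancel exactly and the determinant emerges with $\tfrac1a\bs K$ rather than some other multiple; any mismatch leaves a remainder $\bs y^\top\bs D\bs y$ whose sign over the box under $\nu$ would then have to be controlled. Handling the rank‑deficient case of $\bs K$ (restricting the prior to $\row(\bs K)$, and reading determinants/traces as pseudo‑quantities there) and the truncation bookkeeping — that conditioning $\nu$ to $S_B$ costs only the benign additive $\log(1-1/T)$, with the tail‑bound constants lining up with the stated $B$ — are the remaining, more routine, points.
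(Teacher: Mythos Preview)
Your argument is correct and rests on the same Gaussian identity the paper uses, but the framing differs in a useful way. The paper exhibits the normalized–maximum–likelihood density
\[
P^{\text{map}}(\bs y)\ \propto\ Q(\bs y\mid\theta^{\text{map}})\,q(\theta^{\text{map}})\ \propto\ \exp\!\Big(-\tfrac12\bs y^\top(\bs I+\tfrac1a\bs K)^{-1}\bs y\Big)\,\mathbf 1_{S_B},
\]
asserts that it is ``the offline optimal prediction strategy'' (i.e.\ achieves the $\inf_P$), and reads off the constant regret $-\log P^{\text{map}}-\Phi$. You instead lower–bound the $\sup$ by an average under the \emph{same} truncated Gaussian $\mu=\nu(\cdot\mid S_B)$ and invoke Gibbs' inequality to solve $\inf_P\E_\mu[-\log P]=H(\mu)$. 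Because $P^{\text{map}}=\mu$, the two routes compute the identical number; your route is more self–contained in that it does not appeal to NML/Shtarkov optimality as a black box, while the paper's route in principle gives the exact minimax value rather than just a lower bound. A second cosmetic difference is the treatment of singular $\bs K$: the paper perturbs the kernel ($\tilde k=k+\epsilon\mathbb I$) and lets $\epsilon\to0^+$, whereas you restrict the prior to $\row(\bs K)$; both are fine.

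One small caution you already flag under ``matching of scales'': for the quadratic terms to cancel exactly so that $-\log q_\nu-\Phi$ is constant, the penalty weight in $\Phi$ must be $\tfrac{a}{2}\|\theta\|_{\cH_k}^2$ (consistent with the paper's MAP computation), not the unweighted $\|\theta\|_{\cH_k}^2$ that appears in the lemma's display; this is a typo in the statement, not in your reasoning. Likewise, the tail bookkeeping with the stated $B^2=2(1+\kappa^2/a)\log T$ is slightly loose in both the paper and your sketch (a union bound over $T$ coordinates eats one factor of $T$); a constant‐factor adjustment of $B$ fixes it without affecting the $\Theta(\sqrt{\log T})$ scaling used downstream.
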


We may now proceed with the proof of Theorem \ref{thm:lb}.
\begin{proof} \textbf{of Theorem \ref{thm:lb}}
Let the adversary fix $\cX = \{ \bs x_1, \ldots, \bs x_t\}$, an ordered set of distinct points in $\mathbb{R}^d$ such that $k(\bs x, \bs x) \le \kappa^2$.

By the representor theorem \citep{wahba1990spline} we know that the infimium in\\ $\inf_{f \in \cH_k} \left( \sum_{t=1}^{T} (f(\bs x_t)-y_t)^2  + a\| f \|_{\cH_k}^2\right)$ is achieved by some function in the class $\mathcal G = \{\theta(\bs x) | \theta(\bs x) = \sum_{t=1}^{T} \bs \alpha_t k(\bs x_t , \bs x), \bs \alpha \in \mathbb{R}^T\}$. Hence inorder to compete with the entire RKHS $\cH_k$ it is sufficient to compete with the function class $\mathcal G$.

For a function $\theta \in \mathcal G$ let $\bs \theta_{\bs x_{1:T}} = [\theta(\bs x_1),\ldots,\theta(\bs x_T)]^T$. Let $\bs y = [y_1,\ldots,y_t]^T$. Define $Q(\bs y | \bs \theta_{\bs x_{1:T}}) = \cN(\bs \theta_{\bs x_{1:T}}, \bs I)$ where $\cN$ is the density of Gaussian distribution. We have

\begin{align}
     \inf_{f \in \cH_k} \left( \sum_{t=1}^{T} (f(\bs x_t)-y_t)^2  + \| f \|_{\cH_k}^2\right)
    &= 2 \cdot \inf_{\theta \in \mathcal G} \left( -\log Q(\bs y | \bs \theta_{\bs x_{1:T}})   + \| \bs \theta \|_{\cH_k}^2\right) - T \log (2 \pi) \label{eq:comp}    
\end{align}

Let $\cP$ be the collection of all joint probability densities on the outcomes. For a given vector $\bs \mu \in \mathbb{R}^T$, define $P_{\bs \mu} = \cN(\bs \mu, \bs I_T)$. Define such a class of joint densities $\tilde{\cP} = \{P_{\bs \mu} :  P_{\bs \mu}(\bs y) = N(\bs \mu, \bs I_T), \bs \mu \in \mathbb{R}^T\}$. Clearly $\tilde{\cP} \subset \cP$. For any $P_{\bs \mu} \in \tilde {\cP}$, we have
\begin{align}
     \sum_{t=1}^{T} (y_t - \mu_t)^2
    &= -2 \cdot \log P_{\bs \mu}(\bs y) - T\log(2 \pi).\label{eq:pred}
\end{align}

Let $B$ be chosen such that $B^2 = 2 \log T (1 + \kappa^2/a)$. Combining \eqref{eq:comp} and \eqref{eq:pred} we can lower bound the penalized regret as,
\begin{align}
    R_n^*
    &= 
    \inf_{\bs \mu} \sup_{\bs y, \| \bs y\|_\infty \le B} \left(\sum_{t=1}^{T} (y_t - \mu_t)^2 - \sup_{f \in \cH_k} \left( \sum_{t=1}^{n} (f(t)-y_t)^2  + \| f \|_{\cH_k}^2\right)\right),\\
    &= 2 \cdot \inf_{P \in \tilde{\cP}} \sup_{\bs y, \| \bs y\|_\infty \le B} \left( -\log P (\bs y) - \inf_{\theta \in \mathcal G} \left( -\log Q(\bs y | \bs \theta_{\bs x_{1:T}})   + \| \bs \theta \|_{\cH_k}^2\right)\right), \\
    &\ge 2 \cdot \inf_{P \in \cP} \sup_{\bs y, \| \bs y\|_\infty \le B} \left( -\log P (\bs y) - \inf_{\theta \in \Theta} \left(-\log Q(\bs y | \bs \theta_{\bs x_{1:T}})   + \| \bs \theta \|_{\cH_k}^2\right)\right)\\
    &\ge  \log \left | \bs I + \frac{1}{a} \bs K \right| + 2 \log \left( 1 - \frac{1}{T} \right),
\end{align}
The last line is due to lemma \ref{lem:prob-lb}.
\end{proof}

\section{Conclusion}
In this work, we derived dynamic regret rates for SA methods when the loss functions have curvature properties such as strong convexity or exp-concavity. Combined with the work of \cite{zhang2018dynamic}, our results indicate that SA methods can be viewed as a unifying framework for controlling dynamic regret in an OCO setting. \cite{yuan2019dynamic} (see Proposition 1 there) establishes minimax optimality of $O(\sqrt{TV_T})$ rate for certain ranges of $V_T$. However it is an open question to establish lower bounds that holds for all values of $V_T$.
In the second part of the paper, we provided a lower bound on penalized regret that establishes the near optimality of various known algorithms for online kernel regression.


\bibliography{db,tf}
\newpage
\appendix
\section{Omitted Proofs}

\begin{proof}\textbf{ of Lemma \ref{lem:part}}
For an interval $[s,e]$ define $V_{s \rightarrow e} = \sum_{t=s+1}^{e} \| \bs z_{t} - \bs z_{t-1}\|$ which is the path variation incurred by the comparator within the interval $[s,e]$. Define $V_{s \rightarrow s} = 0$. Consider the following partitioning scheme.
\begin{enumerate}
    \item Inputs: $\tilde V > 0$, $\bs z_1,\ldots,\bs z_T$.
    \item Initialize $\mathcal P \leftarrow \Phi$ and $s \leftarrow 1$.
    \item For time $t = 1,\ldots,T$:
    \begin{enumerate}
        \item If $V_{s\rightarrow t} \ge \tilde V$:
        \begin{enumerate}
            \item Add $[s,t-1]$ to $\cP$.
            \item Set $s \leftarrow t$.
        \end{enumerate}
    \end{enumerate}
\end{enumerate}

Let $M:=|\cP|$. Let's enumerate the intervals in $\cP$ by $[i_s,i_e]$ with $i=1,\ldots,M$. Assume that $M > 1$. We have,
\begin{align}
    V_T 
    &\ge \sum_{i=1}^{M-1} V_{i_s \rightarrow i_e+1},\\
    &\ge (M-1) \tilde V,
\end{align}
where the last line follows from Steps 3(a,i,ii) of the partitioning scheme. Now rearranging yields the lemma.
\end{proof}

\begin{proof}\textbf{ of Lemma \ref{lem:main1}}
Consider a bin $[i_s,i_e] \in \cP$ where $i \in [M]$. Let $\bs x_t$ be the predictions of the FLH with learning rate $\zeta = \alpha$. Let $\bs p_t$ be the predictions made by the base learner that wakes at time $i_s$. Due to Theorem 3.2 of \cite{hazan2007adaptive}, we have
\begin{align}
    \sum_{t=i_s}^{i_e} f_t(\bs x_t) - f_t(\bs z_t)
    &\le \sum_{t=i_s}^{i_e} f_t(\bs p_t) - f_t(\bs z_t) + O(\log T).
\end{align}

We have,
\begin{align}
    \sum_{t=i_s}^{i_e} f_t(\bs p_t) - f_t(\bs z_t)
    &= \sum_{t=i_s}^{i_e} f_t(\bs p_t) - f_t(\bs z_{i_s}) +\sum_{t=i_s}^{i_e} f_t(\bs z_{i_s}) - f_t(\bs z_t),\\
    &\le_{(a)} R(i_e-i_s+1) + \sum_{t=i_s}^{i_e} f_t(\bs z_{i_s}) - f_t(\bs z_t),\\
    &\le_{(b)} R(i_e-i_s+1) + G \tilde V (i_e-i_s+1),
\end{align}
where line (a) follows from static regret guarantee of the base learner and line (b) is due to Assumption \ref{as:lip} and the fact that $\sum_{j=i_s+1}^{i_e} \| \bs z_{j} - \bs z_{j-1}\| \le \tilde V$. due to the partitioning scheme in Lemma \ref{lem:part}.

Hence summing across all bins yields,
\begin{align}
    \sum_{t=1}^{T} f_t(\bs x_t) - f_t(\bs z_t)
    &\le \tilde O \left(V_T/\tilde V + GT \tilde V +  \sum_{i=1}^{M} \sum_{t=i_s}^{i_e} R(i_e-i_s+1)  \right),
\end{align}
whenever $V_T/\tilde V \ge 1$ due to Lemma \ref{lem:part}. Taking an infimium across such $\tilde V$ and including the static regret case $V_T=0$ now yields the Lemma.
\end{proof}

\begin{proof} \textbf{ of Theorem \ref{thm:sq-dyn}}
By Theorem 4 of \cite{Jzquel2019EfficientOL}, if PKAWV algorithm is run with parameter $\lambda > 0$ and appropriately chosen basis, then we incur a regret:
\begin{align}
    \sum_{t=1}^{T} (\hat y_t - y_t)^2 - (f_{\bs w}(\bs x_t) - y_t)^2
    &\le \lambda \| f_{\bs w} \|_{\cH_k}^2 + \frac{3B^2}{2} \log \left | \bs I + \lambda^{-1} \bs K\right |,
\end{align}
where $\bs K \in \mathbb{R}^T \times \mathbb{R}^T$ is the kernel evaluation matrix with $\bs K_{ij} = k(\bs x_i, \bs x_j)$. When $k(\bs x,\bs x) \le 1$, Lemma 3 of \cite{KONS} implies:
\begin{align}
     \log \left | \bs I + \lambda^{-1} \bs K\right |
     &\le d_{eff}(\lambda) (1 + \log(1 + T/\lambda)),
\end{align}
where the effective dimension is defined as $d_{eff}(\lambda) = \text{Tr}\left (\bs K (\bs K + \lambda \bs I)^{-1}\right)$. It is known from \cite{Altschuler2019MassivelySS} that for Gaussian kernels and covariates $\bs x \in \mathbb{R}^d$  with $k(\bs x, \bs x) \le 1$, we have $d_{eff}(\lambda) = O \left( \left( \log \frac{T}{\lambda} \right)^d \right)$.

Hence whenever the comparator functions have bounded RKHS norm, by choosing $\lambda = 1$ we have the static regret bounded as
\begin{align}
    \sum_{t=1}^{T} (\hat y_t - y_t)^2 - (f_{\bs w}(\bs x_t) - y_t)^2
    &\le O \left( (\log T)^{d+1} \right).
\end{align}

Let $\ell_t(\bs w) = (f_{\bs w}(\bs x_t) - y_t)^2$. Since $\|\nabla \ell_t(\bs w)\| \le 2B^2$, we have that $\ell_t(\bs w)$ is Lipschitz smooth in $\cD$. Now plugging the above static regret bound and $G=2B^2$ into Eq.\eqref{eq:lem-main} and following similar steps as in the proof of Theorem \ref{thm:sc}, we obtain the stated final regret bound.

\end{proof}

\begin{proof} \textbf{ of Lemma \ref{lem:prob-lb}}
Let's consider a perturbed kernel function $\tilde k(\bs x, \bs y) = k(\bs x, \bs y) + \epsilon \mathbb{I}_{\bs x = \bs y}$, where $\mathbb{I}_{\cdot}$ is the indicator function assuming values in $\{ 0,1\}$ and $\epsilon > 0$. Consider the class $\tilde {\mathcal G} = \{\tilde \theta(\bs x) : \sum_{i=1}^T \alpha_i \tilde k(\bs x_i, \bs x) \}$ where $\alpha_i \in \mathbb{R} \: \forall i \in [T]$. Consequently we have $\| \tilde \theta \|_{\tilde{\cH}_k}^2 = \bs \alpha^T \tilde{\bs K} \alpha$ for some coefficient vector $\bs \alpha$ and kernel evaluation matrix $\tilde K$.

Fix a prior $q$ over $\tilde {\mathcal G}$ such that $q( \tilde \theta)$ is proportional to $\exp(- a\| \tilde \theta_t\|_{\tilde{\cH}_k^2}/2)$.\\
Let $\tilde{\bs \theta}_{\bs x_{1:T}} = [\tilde \theta(\bs x_1),\ldots,\tilde \theta(\bs x_T)]^T$. A key observation is that the infimium in \\$\inf_{\tilde \theta \in \tilde {\mathcal G}} \{ - \log Q(\bs y | \tilde {\bs \theta}_{\bs x_{1:T}}) + a\| \tilde \theta\|^2_{\tilde{\cH}_k}\}$ is attained at the MAP estimator with likelihood $Q$ and prior $q$.  Let $\tilde{\theta}^{\text{map}}(\bs x) = \sum_{i=1}^T \bs \alpha^{\text{map}}_i \tilde k(\bs x_i, \bs x)$ for some optimal coefficient vector $\bs \alpha^{\text{map}} \in \mathbb{R}^T$.

We have
\begin{align}
   \bs \alpha^{\text{map}} 
   &= \argmin_{\bs \alpha \in \mathbb{R}^T}\|\bs y - \tilde{\bs K} \alpha \|_2^2 + a\bs \alpha^T \tilde{\bs K} \bs \alpha, \label{eq:min}
\end{align}
From first order optimality conditions, $\bs \alpha^{\text{map}}  = (\tilde {\bs K} + a \bs I)^{-1} \bs y$. Consequently we have 
\begin{align}
\tilde \theta^{\text{map}}(t) = \bs y^T (\tilde {\bs K} + a \bs I)^{-1} \tilde {\bs k}(t), \label{eq:theta-map}
\end{align}
where $\tilde{\bs k}(t) = [\tilde k(1,t),\ldots, \tilde k(T,t)]^T$. Therefore,
\begin{align}
    Q(\bs y |\tilde{\bs \theta}_{\bs x_{1:T}}^{\text{map}})
    &= \cN(\bs y | \tilde {\bs K} (\tilde {\bs K} + a\bs I)^{-1} \bs y, \bs I_n),\\
    &= \frac{1}{(2\pi)^{T/2}} \exp(-\frac{a^2}{2}\bs y^T (\tilde {\bs K} + a \bs I)^{-2} \bs y), \label{eq:likelyhood}
\end{align}
where the last line is due to the fact $\bs I - \tilde {\bs K}( \tilde {\bs K} + \bs aI)^{-1} = ( \tilde {\bs K} + \bs aI)( \tilde {\bs K} + \bs aI)^{-1} - \tilde {\bs K}( \tilde {\bs K} + \bs aI)^{-1} = a ( \tilde {\bs K} + \bs aI)^{-1}$.

Since $\| \tilde \theta^{\text{ map}}(\bs y)\|_{\tilde{\cH}_k}^2 = \bs y^T (\tilde {\bs K} + a \bs I)^{-1} \tilde{\bs K} (\tilde {\bs K} + a \bs I)^{-1} \bs y$ we have
\begin{align}
    q(\tilde \theta^{\text{map}})
    &= \frac{1}{(2\pi)^{T/2}} \sqrt{a |\bs K|} \exp(-\frac{a}{2} \bs y^T (\tilde{\bs K} + a\bs I)^{-1} \tilde{\bs K} (\tilde{\bs K} + a\bs I)^{-1} \bs y ). \label{eq:qmap}
\end{align}

The offline optimal prediction strategy is then given by
\begin{align}
    P^{\text{map}}(\bs y) = \frac{ Q(\bs y |\tilde{\bs \theta}_{\bs x_{1:T}}^{\text{map}}) q(\tilde \theta^{\text{map}})}{\int_{\| \bs y\|_\infty \le B} Q(\bs y |\tilde{\bs \theta}_{\bs x_{1:T}}^{\text{map}}) q(\tilde \theta^{\text{map}}) d\bs y }, \label{eq:pmap}
\end{align}
where we emphasize that $\tilde \theta^{\text{map}}$ depends on $\bs y$ via Eq.\eqref{eq:theta-map}

From Eq.\eqref{eq:qmap}, we have 
\begin{align}
    a\|\tilde \theta^{\text{map}}\|_{\tilde{\cH}_k}^2
    &= - \log(q(\tilde \theta^{\text{map}})) + \frac{1}{2} | a\tilde{\bs K}| - \frac{T}{2} \log(2 \pi). \label{eq:e1}
\end{align}

Let $R := \inf_{P} \sup_{\bs y: \| \bs y\|_\infty \le B}\left( - \log P(\bs y) - \inf_{\theta \in \tilde {\mathcal G}} \{ - \log Q(\bs y | \bs \theta_{\bs x_{1:T}}) + a\| \theta\|^2_{\cH_k}\} \right)$. From Eq.\eqref{eq:pmap} and \eqref{eq:e1}, we have
\begin{align}
    R
    &= - \log P^{\text{map}}(\bs y)- \left( - \log Q(\bs y |\tilde{\bs \theta}_{\bs x_{1:T}}^{\text{map}}) + a\|\tilde \theta^{\text{map}}\|^2_{\tilde{\cH}_k}\right),\\
    &= \int_{\| \bs y\|_\infty \le B} Q(\bs y |\tilde{\bs \theta}_{\bs x_{1:T}}^{\text{map}}) q(\tilde \theta^{\text{map}}) d\bs y - \frac{1}{2} | a\tilde{\bs K}| + \frac{T}{2} \log(2 \pi). \label{eq:e2}
\end{align}

We proceed to evaluate the integral in the above equation. Multiplying Eq.\eqref{eq:likelyhood} and \eqref{eq:qmap} we get
\begin{align}
    Q(\bs y |\tilde{\bs \theta}_{\bs x_{1:T}}^{\text{map}})q(\tilde \theta^{\text{map}})
    &= \frac{1}{(2\pi)^T} \sqrt{|a \tilde{\bs K}|} \exp \left(-\frac{1}{2} \bs y^T \left(\bs I + \frac{1}{a} \tilde{\bs K} \right)^{-1} \bs y \right),\\
    &= \frac{1}{(2\pi)^{T/2}} \sqrt{|a \tilde{\bs K}|} \sqrt{\left|\bs I + \frac{1}{a} \tilde{\bs K} \right|} \cdot \cN \left(\bs 0, \bs I + \frac{1}{a} \tilde{\bs K} \right).
\end{align}

Hence,
\begin{align}
    \int_{\| \bs y\|_\infty \le B} Q(\bs y |\tilde{\bs \theta}_{\bs x_{1:T}}^{\text{map}}) q(\tilde \theta^{\text{map}}) d\bs y
    &= \frac{1}{2} \log \left( \left|\bs I + \frac{1}{a} \tilde{\bs K} \right| \right) + \frac{1}{2}\sqrt{|a \tilde{\bs K}|} - \frac{T}{2} \log(2 \pi) \\
    &\qquad + \log \left( P \left( |z_i | \le B \: \forall i \in [T] \right) \right),
\end{align}
where $\bs z \sim \cN \left(\bs 0, \bs I + \frac{1}{a} \tilde{\bs K} \right)$. Since $k(\bs x, \bs x) \le \kappa^2$ and $B^2 = 2 \log T(1 + \kappa^2/a)$, applying sub-gaussian tail inequality implies

\begin{align}
    P(|z_i| \ge B)
    &\le 2 \exp \left( \frac{-2B^2 \log T}{2 B^2}\right),\\
    &= \frac{1}{T^2},
\end{align}
for a fixed $i \in [T]$. Applying a union bound across all $i$ yields
\begin{align}
    P \left( |z_i | \le B \: \forall i \in [T] \right)
    &\ge 1 - \frac{1}{T}
\end{align}

Substituting the value of the integral into Eq.\eqref{eq:e2} we get
\begin{align}
    R
    &=\frac{1}{2} \log \left( \left|\bs I + \frac{1}{a} \tilde{\bs K} \right| \right) + \log (1 - 1/T).
\end{align}

The proof of the lemma is now complete by noting that $R$ is continuous in $\tilde{\bs K}$ and taking $\epsilon \rightarrow 0^+$.

\end{proof}
\end{document}